\newtheorem{thm}{Theorem}[]
\newtheorem{prop}{Property}[]
\newtheorem{cor}{Corollary}[]
\let\oldnl\nl
\newcommand{\nonl}{\renewcommand{\nl}{\let\nl\oldnl}}
\newcommand{\citet}[1]{\citeauthor{#1} \shortcite{#1}}
\newcommand{\citep}{\cite}
\newcommand{\tikzmark}[1]{\tikz[overlay,remember picture] \node (#1) {};}
\newcommand*{\AddNote}[4]{%
    \begin{tikzpicture}[overlay, remember picture]
        \draw [decoration={brace,amplitude=0.5em},decorate,ultra thick,blue]
            ($(#3)!(#1.north)!($(#3)-(0,1)$)$) --  
            ($(#3)!(#2.south)!($(#3)-(0,1)$)$)
                node [align=center, text width=2.5cm, pos=0.5, anchor=west] {#4};
    \end{tikzpicture}
}%
\title{Lifelong Learning with a Changing Action Set}
\author{
Yash Chandak\textsuperscript{\rm 1} \,\,\, Georgios Theocharous\textsuperscript{\rm 2} \,\,\, Chris Nota\textsuperscript{\rm 1} \,\,\, Philip S. Thomas\textsuperscript{\rm 1} \\
\textsuperscript{\rm 1}University of Massachusetts Amherst, \textsuperscript{\rm 2}Adobe Research \\
\{ychandak,cnota,pthomas\}@cs.umass.edu \,\,\,\, theochar@adobe.com
}
\begin{document}

\maketitle

\begin{abstract}
In many real-world sequential decision making problems, the number of available actions (decisions) can vary over time. 
%
%
While problems like catastrophic forgetting, changing transition dynamics, changing rewards functions, etc.~have been well-studied 
in the lifelong learning literature, the setting where the size of the action set changes 
remains 
unaddressed.
In this paper, we present first steps towards developing an algorithm that autonomously adapts to an action set whose size changes over time. 
To tackle this open problem, we break it into two problems that can be solved iteratively: inferring the underlying, unknown, structure in the space of actions and optimizing a policy that leverages this structure.  
%
%
We demonstrate the efficiency of this approach on large-scale real-world lifelong learning problems.

\end{abstract}

\section{Introduction}

Real-world problems are often non-stationary. That is, parts of the problem specification change over time. 
We desire autonomous systems that continually adapt by capturing the regularities in such changes, without the need to learn from scratch after every change. 
In this work, we address one form of \textit{lifelong learning} for sequential decision making problems, wherein the set of possible actions (decisions) varies over time. 
Such a situation is omnipresent in real-world problems.
For example, in robotics it is natural to add control components over the lifetime of a robot to enhance its ability to interact with the environment. 
In hierarchical reinforcement learning, an agent can create new 
\textit{options} \citep{between_MDPs} over its lifetime, which are in essence new actions. 
In medical decision support systems for drug prescription, new procedures and medications are continually discovered. 
%
%
%
%
In product recommender systems, new products are constantly added to the stock,  
and in tutorial recommendation systems, new tutorials are regularly developed, thereby continuously increasing the number of available actions for a recommender engine.
%
%
%
%
%
These examples capture the broad idea that, for an agent that is deployed in real world settings, the possible decisions it can make changes over time, and motivates the question that we aim to answer: 
\textit{how do we develop algorithms that can continually adapt to such changes in the action set over the agent's lifetime?}   

\emph{Reinforcement learning} (RL) has emerged as a successful class of methods for solving sequential decision making problems. 
However, excluding notable exceptions that we discuss later \citep{Boutilier2018PlanningAL,mandel2017add}, its applications have been limited to settings where the set of actions is fixed. 
This is likely because RL algorithms are designed to solve a mathematical formalization of decision problems called \emph{Markov decision processes} (MDPs) \citep{puterman2014markov}, wherein the set of available actions is fixed. 
To begin addressing our lifelong learning problem, we first extend the standard MDP formulation to incorporate this aspect of changing action set size. 
Motivated by the regularities in real-world problems, we consider an underlying, unknown, structure in the space of actions from which new actions are generated. 
We then theoretically analyze the difference between what an algorithm can achieve with only the actions that are available at one point in time, and the best that the algorithm could achieve if it had access to the entire underlying space of actions (and knew the structure of this space). 
%
Leveraging insights from this theoretical analysis, we then study how the structure of the underlying action space 
can be recovered from interactions 
with the environment, and how algorithms can be developed to use this structure to facilitate lifelong learning. 

As in the standard RL setting, when facing a changing action set, 
the parameterization of the policy plays an important role.
The key consideration here is how to parameterize the policy and adapt its parameters when the set of available actions changes. 
%
To address this problem, we leverage the structure in the underlying action space to parameterize the policy such that it is invariant to the cardinality of the action set---changing the number of available actions does not require changes to the number of parameters or the structure of the policy. 
Leveraging the structure of the underlying action space also improves generalization by allowing the agent to infer the outcomes of actions similar to actions already taken.
These advantages make our approach ideal for lifelong learning problems where the action set changes over time, and where quick adaptation to these changes, via generalization of prior knowledge about the impact of actions, is beneficial.  
%
%

\section{Related Works}
Lifelong learning is a well studied problem 
\citep{thrun1998lifelong,ruvolo2013ella,silver2013lifelong,chen2016lifelong}.
Predominantly, prior methods aim to address catastrophic forgetting problems in order to leverage prior knowledge for new tasks \citep{french1999catastrophic,kirkpatrick2017overcoming,lopez2017gradient,zenke2017continual}.
Several meta-reinforcement-learning methods aim at addressing the problem of transfer learning, few-shot shot adaption to new tasks after training on a distribution of similar tasks, and automated hyper-parameter tuning \citep{xu2018meta,gupta2018meta,wang2017learning,duan2016rl,finn2017model}.
Alternatively, many lifelong RL methods consider learning online in the presence of \textit{continuously} changing transition dynamics or reward functions \citep{neu2013online,gajane2018sliding}. 
In our work, we look at a complementary aspect of the lifelong learning problem, wherein the size of the action set available to the agent change over its lifetime. 

%

%
%
Our work also draws inspiration from recent works which leverage action embeddings \citep{dulac2015deep,he2015deep,bajpai2018transfer,chandak2019action,naturalGuy}.
Building upon their ideas, we present a new objective for learning structure in the action space, and show that  
the performance of the policy resulting from using this inferred structure has bounded sub-optimality.  
Moreover, in contrast to their setup 
where the size of the action set is fixed, we consider the case of lifelong MDP, where the number of actions changes over time.

\citet{mandel2017add} and \citet{Boutilier2018PlanningAL} present the work most similar to ours.  
\citet{mandel2017add} consider the setting where humans can provide 
new actions to an RL system. 
%
The goal in their setup is to minimize human effort by querying for new actions only at states 
where new actions are most likely to boost 
performance.
In comparison, our setup considers the case where the  new actions become available through some external, unknown, process 
and the goal is to build learning algorithms that can efficiently adapt to such changes in the action set.
\citet{Boutilier2018PlanningAL} laid the foundation for the stochastic action set MDP (SAS-MDP) setting where there is a fixed, finite, number of (base) actions and the available set of actions is a stochastically chosen subset of this base set.  
While SAS-MDPs can also be considered to have a `changing action set', unlike their work there is no fixed maximum number for the available actions in our framework.
Further, in their setup, there is a \textit{possibility} that within a single long episode an agent can observe \textit{all} possible actions it will ever encounter. 
In our set-up, this is never possible.
As shown by \citet{Boutilier2018PlanningAL}, SAS-MDPs can also be reduced to standard MDPs by extending the state space to include the set of available action.
This cannot be done in our lifelong-MDP setup, as that would imply that the state-space is changing across episodes or the MDP is non-stationary.
The works by \citet{gabel2008reinforcement} and \citet{ferreira2017answer} also consider subsets of the base actions for DEC-MDPs and answer-set programming, respectively,  but all the mentioned differences from the work by \citet{Boutilier2018PlanningAL} are also applicable here. 
%
%

These differences let the proposed work better capture the challenges of lifelong learning, where the cardinality of the action set itself varies over time and an agent has to deal with actions that it has never dealt with before.

\section{Lifelong Markov Decision Process}
\label{sec:lmdp}
        MDPs, the standard formalization of decision making problems, are not flexible enough to encompass lifelong learning problems wherein the action set size changes over time. 
    	In this section we extend the standard MDP framework to model this setting. 

        In real-world problems where the set of possible actions changes, there is often underlying structure in the set of all possible actions (those that are available, and those that may become available). 
        %
        %
        For example, tutorial videos can be described by feature vectors that encode their topic, difficulty, length, and other attributes;
        in robot control tasks, primitive locomotion actions like left, right, up, and down could be encoded by their change to the Cartesian coordinates of the robot, etc. 
        Critically, we will not assume that the agent knows this structure, merely that it exists. 
        If actions are viewed from this perspective, then the set of all possible actions (those that are available at one point in time, and those that might become available at any time in the future) can be viewed as a vector-space, $\mathcal E \subseteq \mathbb R^d$. 

\begin{figure}[t]
		\centering
		\includegraphics[width=0.45\textwidth]{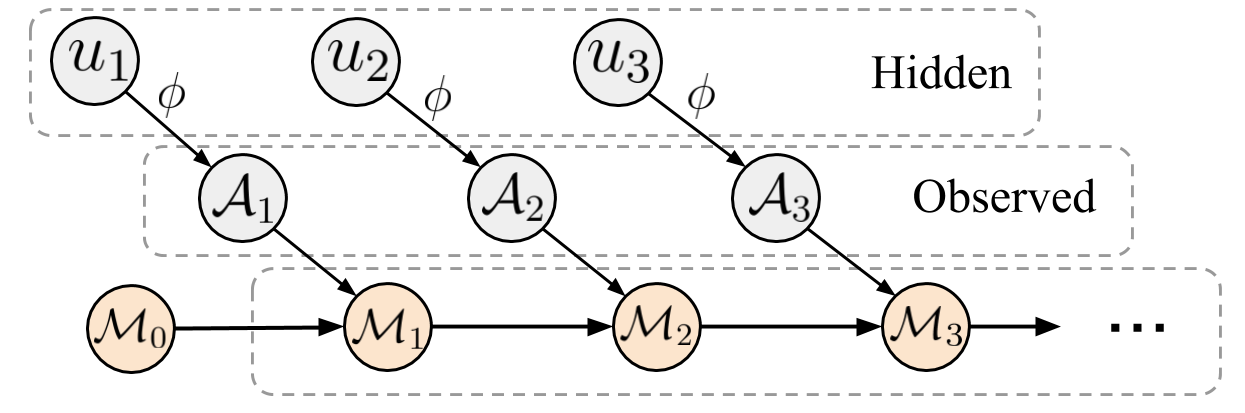}
		\caption{ Illustration of a \emph{lifelong MDP} where $\mathcal M_0$ is the base MDP. For every change $k$, $\mathcal M_K$ builds upon $\mathcal M_{k-1}$ by including the newly available set of actions $\mathcal A_k$. The internal structure in the space of actions is hidden and only a set of discrete actions is observed. 
		}
		\label{Fig:CL-MDP}
\end{figure}

        To formalize the lifelong MDP, we first introduce the necessary variables that govern when and how new actions are added.
        We denote the episode number using $\tau$.
        Let $I_\tau \in \{0, 1\}$ be a random variable that indicates whether a new set of actions are added or not at the start of episode $\tau$, and let frequency $\mathcal F: \mathbb N \rightarrow [0, 1]$ be the associated probability distribution over episode count, such that $\Pr(I_\tau=1) = \mathcal F(\tau)$. 
        %
        %
        %
        Let $U_\tau \in 2^{\mathcal E}$ be the random variable corresponding to the set of actions that is added before the start of episode $\tau$. 
        When $I_\tau=1$, we assume that $U_\tau\neq\emptyset$, and when $I_\tau=0$, we assume that $U_\tau = \emptyset$. 
        Let $\mathcal D_\tau$ be the distribution of $U_\tau$ when $I_\tau=1$, i.e., $U_\tau \sim \mathcal D_\tau$ if $I_\tau=1$. 
        We use $\mathcal D$ to denote the set $\{\mathcal D_\tau\}$ consisting of these distributions.
        Such a formulation using $I_\tau$ and $\mathcal D_\tau$ provides a fine control of when and how new actions can be incorporated.
        This allows modeling a large class of problems where both the distribution over the type of incorporated actions as well intervals between successive changes might be irregular. 
        Often we will not require the exact episode number $\tau$ but instead require $k$, which denotes the number of times the action set is changed.
        %
        %

        Since we do not assume that the agent knows the structure associated with the action, we instead provide actions to the agent as a set of discrete entities, $\mathcal A_k$. 
        To this end, we define $\phi$ to be a map relating the underlying structure of the new actions to the observed set of discrete actions $\mathcal A_k$ for all $k$, i.e., if the set of actions added is $u_k$, then 
        $\mathcal A_k=\{  \phi(e_i) | e_i \in u_k\}$. 
        Naturally, for most problems of interest, neither the underlying structure $\mathcal E$, nor the set of distributions $\mathcal D$, nor the frequency of updates $\mathcal F$, nor the relation $\phi$ is known---the agent only has access to the observed set of discrete actions.
        
        %
        %
        %

        We now define the \textit{lifelong Markov decision process} (L-MDP)  as $\mathscr{L} = (\mathcal M_0, \mathcal E, \mathcal D, \mathcal F)$, which extends a \textit{base} MDP $\mathcal M_0 = (\mathcal{S}, \mathcal{A},\mathcal{P},\mathcal{R}, \gamma, d_0)$. 
        %
        %
        %
        %
    	%
    	$\mathcal{S}$ is the set of all possible states that the agent can be in, called the state set.
    	$\mathcal A$ is the discrete set of actions available to the agent, and for $\mathcal M_0$ we define this set to be empty, i.e., $\mathcal A=\emptyset$. 
    	When the set of available actions changes and 
    	the agent observes a new set of discrete actions, $\mathcal A_k$, then $\mathcal M_{k-1}$ transitions to $\mathcal M_k$, such that  $\mathcal A$ in $\mathcal M_k$ is the set union of $\mathcal A$ in $\mathcal M_{k-1}$ and $\mathcal A_k$. 
    	Apart from the available actions, other aspects of the L-MDP remain the same throughout.
    	An illustration of the framework is provided in Figure \ref{Fig:CL-MDP}.
    	%
        %
        %
        %
        %
    	We use $S_t \in \mathcal S$, $A_t \in \mathcal A$, and $R_t \in \mathbb R$ as random variables for denoting the state, action and reward at time $t \in \{0,1,\dotsc\}$ within each episode.
    	%
        %
    	The first state, $S_0$, comes from an initial distribution, $d_0$, and the reward function $\mathcal R$ is defined to be only dependent on the state such that $\mathcal R(s)=\mathbf{E}[R_t|S_t=s]$ for all $s \in \mathcal S$. 
    	We assume that $R_t \in [-R_\text{max},R_\text{max}]$ for some finite $R_\text{max}$. 
    	The reward discounting parameter is given by $\gamma \in [0,1)$. 
    	$\mathcal{P}$ is the state transition function, such that for all $s,a,s',t$, the function $ \mathcal{P}(s,a,s')$ denotes the transition probability $ P(s'| s, e)$, where $a = \phi(e)$.\footnote{For notational ease, (a) we overload symbol $P$ for representing both probability mass and density; 
    	(b) we assume that the state set is finite, however, our primary results extend to MDPs with continuous states.
    	} 

    	In the most general case, new actions could be completely arbitrary and have no relation to the ones seen before.
    	In such cases, there is very little hope of lifelong learning by leveraging past experience.
    	To make the problem more feasible, we resort to a notion of \textit{smoothness} between actions.
    	Formally, we assume that transition probabilities in an L-MDP are $\rho-$Lipschitz in the structure of actions, i.e., $\exists \rho > 0$ s.t.,
    	\begin{equation} 
    	 \forall s, s', e_i, e_j \hspace{5pt} \lVert P(s'| s,e_i) - P(s'| s,e_j) \rVert_1 \leq \rho \lVert e_i - e_j\rVert_1. 
    	\label{eqn:lipschitz}
    	\end{equation}
    	%
    	%
    	%
    	%
    	%
    	For any given MDP $\mathcal{M}_k$ in $\mathscr L$, an agent's goal is to find a policy, $\pi_k$, that maximizes the expected sum of discounted future rewards.
    	For any policy $\pi_k$, the corresponding state value function is $v^{\pi_k}(s) = \mathbf{E}[\sum_{t=0}^{\infty}\gamma^t R_{t} |s, \pi_k]$. 
    	%
    	%
    	%
    	%

 \section{Blessing of Changing Action Sets}
 \label{sec:blessing}
Finding an optimal policy when the set of possible actions is large 
is 
difficult due to the curse of dimensionality.
In the L-MDP setting this problem might appear to be exacerbated, as an agent must additionally adapt to the changing levels of possible performance as new actions become available. 
This raises the natural question: \textit{as new actions become available, how much does the performance of an optimal policy change? }
%
If it fluctuates significantly, can a lifelong learning agent succeed by continuously adapting its policy, or is it better to learn from scratch with every change to the action set? 

To answer this question, 
consider an optimal policy, $\pi^*_k$, for MDP $\mathcal M_k$, i.e., an optimal policy when considering only policies that use actions that are available during the $k^\text{th}$ episode. 
%
%
We now 
quantify how sub-optimal $\pi^*_k$ is relative to the performance of a hypothetical policy, $\mu^*$, that acts optimally given access to all possible actions. 
\begin{thm}
\label{thm:1}
In an L-MDP, let $\epsilon_k$ denote the maximum distance in the underlying structure of the closest pair of available actions, i.e.,  
$\epsilon_k \coloneqq \underset{a_i \in \mathcal A}{\text{sup}} ~ \underset{a_j \in \mathcal A}{\text{inf}}  \lVert e_i - e_j \rVert_1$, 
then
\begin{align}
    v^{\mu^*}(s_0) -v^{\pi^*_k}(s_0)   &\leq \frac{\gamma  \rho \epsilon_k}{(1 - \gamma)^2}  R_{\text{max}}.  \label{eqn:thm1} 
\end{align}
\end{thm}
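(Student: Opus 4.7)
The plan is to exhibit a policy on $\mathcal{M}_k$ that cannot do worse than $\pi^*_k$ and cannot do too much worse than $\mu^*$. The natural candidate is a \emph{nearest-neighbor surrogate} $\tilde\pi_k$ defined on $\mathcal{A}$: for every underlying action $e \in \mathcal E$ that $\mu^*$ would play, route the probability mass to $\phi(e')$ for the $e'$ whose embedding is closest to $e$ among the currently available actions. By the definition of $\epsilon_k$, every such $e'$ satisfies $\lVert e - e' \rVert_1 \leq \epsilon_k$. Since $\pi^*_k$ is optimal in $\mathcal{M}_k$, we have $v^{\pi^*_k}(s_0) \geq v^{\tilde\pi_k}(s_0)$, so it suffices to bound $v^{\mu^*}(s_0) - v^{\tilde\pi_k}(s_0)$.

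The second step is to invoke the Lipschitz assumption in \eqref{eqn:lipschitz} to control how different the one-step transition kernels induced by $\mu^*$ and $\tilde\pi_k$ are. For any state $s$, the two next-state distributions are mixtures of $P(\cdot\mid s, e)$ and $P(\cdot\mid s, e')$ weighted by $\mu^*(e\mid s)$; since each pair $(e,e')$ is within $\epsilon_k$ in $\ell_1$, one obtains the bound $\lVert P^{\mu^*}(\cdot\mid s) - P^{\tilde\pi_k}(\cdot\mid s)\rVert_1 \leq \rho \epsilon_k$ uniformly in $s$. Note the rewards are state-dependent only, so they agree for the two policies at any given state.

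The third step is a standard \textbf{simulation lemma} / performance-difference argument. Writing
\begin{equation}
v^{\mu^*}(s) - v^{\tilde\pi_k}(s) = \gamma \sum_{s'} \bigl(P^{\mu^*}(s'\mid s) - P^{\tilde\pi_k}(s'\mid s)\bigr) v^{\mu^*}(s') + \gamma \sum_{s'} P^{\tilde\pi_k}(s'\mid s) \bigl(v^{\mu^*}(s') - v^{\tilde\pi_k}(s')\bigr),
\end{equation}
and using $\lVert v^{\mu^*}\rVert_\infty \leq R_{\max}/(1-\gamma)$ together with the kernel bound from Step~2, the first term is bounded by $\gamma \rho \epsilon_k R_{\max}/(1-\gamma)$. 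Taking a supremum over $s$ on both sides and rearranging the geometric series yields $\lVert v^{\mu^*} - v^{\tilde\pi_k}\rVert_\infty \leq \gamma \rho \epsilon_k R_{\max}/(1-\gamma)^2$, which together with the optimality of $\pi^*_k$ on $\mathcal{A}$ gives the desired inequality at $s_0$.

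The main obstacle I expect is a minor one: handling $\mu^*$ cleanly when it is a general stochastic policy on a potentially continuous embedding space $\mathcal E$, so that the nearest-neighbor construction is measurable and the mixture-of-kernels bound in Step~2 really does reduce to an application of \eqref{eqn:lipschitz} via Jensen's inequality (or linearity of expectation in the finite case). Everything else is a routine telescoping argument, and no part of the proof requires knowledge of $\mathcal{D}$, $\mathcal{F}$, or $\phi$; the whole bound comes from the covering radius $\epsilon_k$ and the smoothness constant $\rho$.
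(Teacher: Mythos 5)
Your proof is correct, and its core decomposition is exactly the paper's: you build the nearest-neighbor surrogate policy (the paper calls it $\mu_k^*$), use the optimality of $\pi_k^*$ among policies on $\mathcal A$ to reduce the claim to bounding $v^{\mu^*}(s_0)-v^{\tilde\pi_k}(s_0)$, and control the induced one-step kernels by $\rho\epsilon_k$ via the Lipschitz assumption \eqref{eqn:lipschitz} and linearity/Jensen over the mixture $\mu^*(\cdot\mid s)$. The only real divergence is the closing step: the paper converts the uniform kernel perturbation into a value gap through the discounted-state-distribution shift bound $D(\pi_1,\pi_2)\leq \gamma(1-\gamma)^{-1}\lVert (P^{\pi_1}-P^{\pi_2})d_{\pi_2}\rVert_1$ developed in its Appendix A (following Achiam et al.), whereas you close with the Bellman-recursion form of the simulation lemma and solve the resulting geometric recursion in $\lVert v^{\mu^*}-v^{\tilde\pi_k}\rVert_\infty$. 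Both routes yield the same $\gamma/(1-\gamma)^2$ factor; yours skips the matrix-inverse manipulations of Appendix A and is arguably the more elementary and self-contained of the two. Two minor points in your favor: (i) you correctly read $\epsilon_k$ as the covering radius of the actions played by $\mu^*$ by the available set (the theorem's displayed definition, with both the sup and the inf taken over $\mathcal A$, is literally zero as written; your usage matches what the paper's proof actually invokes), and (ii) your reading of \eqref{eqn:lipschitz} as a total-variation bound over $s'$ is the one that makes the kernel step airtight, whereas the paper's own proof passes from $\sup_s\lVert\cdot\rVert_1$ to $\sup_{s,s'}|\cdot|$ somewhat loosely at the corresponding point.
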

\begin{proof}
    See Appendix B. 
\end{proof}
With a bound on the maximum possible sub-optimality, Theorem \ref{thm:1} presents an important connection between achievable performances, the nature of underlying structure in the action space, and a property of available actions in any given $\mathcal M_k$.
Using this, we can make the following conclusion.
\begin{cor}
\label{cor:1}
Let $\mathcal Y\subseteq \mathcal E$ be the smallest closed set such that,  $P(U_k \subseteq 2^\mathcal Y)=1$. We refer to $\mathcal Y$ as the element-wise-support of $U_k$. If $\,\,\forall k$, the element-wise-support of $U_k$ in an L-MDP is $\mathcal E$, 
then as $k \rightarrow \infty$ the sub-optimality vanishes. That is,
$$\lim_{k \rightarrow \infty} v^{\mu^*}(s_0) -v^{\pi^*_k}(s_0)  \rightarrow 0. $$
\end{cor}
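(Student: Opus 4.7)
The plan is to reduce the corollary to a statement about the covering quantity $\epsilon_k$: by Theorem~\ref{thm:1}, the sub-optimality is bounded above by $\gamma\rho R_{\max}\epsilon_k/(1-\gamma)^2$, and all the factors other than $\epsilon_k$ are constants independent of $k$. So it suffices to argue that $\epsilon_k\to 0$ as the number of action updates $k$ grows. Everything else in the statement (the $R_{\max}<\infty$ bound, $\gamma<1$, and the Lipschitz constant $\rho$) is already assumed to be finite and fixed in the definition of an L-MDP.

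To show $\epsilon_k\to 0$, I would use the hypothesis that the element-wise support $\mathcal Y$ of each $U_k$ equals $\mathcal E$. Fix an arbitrary $\delta>0$. Under the mild assumption that the portion of $\mathcal E$ we must cover is (pre)compact in $\mathbb R^d$, it can be covered by finitely many open balls $B_1,\dots,B_N$ of radius $\delta/2$. The defining property of the element-wise support is that any open set that meets $\mathcal E$ has positive probability of containing some element of a freshly drawn $U_\tau$; equivalently, $P(U_\tau\cap B_i\neq\emptyset\mid I_\tau=1)\ge p_i>0$ for each $i$. Provided new actions arrive infinitely often (i.e.\ $\sum_\tau \mathcal F(\tau)=\infty$, which is implicit in considering $k\to\infty$), a Borel--Cantelli style argument guarantees that with probability one, each $B_i$ eventually receives at least one available action. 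Once every $B_i$ has been hit, every point $e\in\mathcal E$ lies in some $B_i$ that also contains an available action, so $\inf_{a_j\in\mathcal A}\|e-e_j\|_1\le\delta$, giving $\epsilon_k\le\delta$ for all sufficiently large $k$. Since $\delta$ was arbitrary, $\epsilon_k\to 0$, and plugging this into the bound from Theorem~\ref{thm:1} yields the claim.

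The main obstacle I expect is the topological/measurability bookkeeping rather than any deep idea. One has to be a bit careful about three things: (i) ensuring a finite sub-cover, which really needs $\mathcal E$ (or at least the part of it that contributes to $\epsilon_k$) to be totally bounded; (ii) translating ``element-wise support equals $\mathcal E$'' into the quantitative statement that every open ball meeting $\mathcal E$ has positive hitting probability under the $\mathcal D_\tau$, which is a definitional unpacking but should be made explicit; and (iii) handling the fact that the distributions $\mathcal D_\tau$ may vary with $\tau$, so a straightforward i.i.d.\ Borel--Cantelli does not directly apply and one needs the second (independence-free) form of Borel--Cantelli using the lower bound on hitting probabilities together with the assumption that updates happen infinitely often. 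Once these are dispatched, the corollary follows almost immediately by passing to the limit inside the bound of Theorem~\ref{thm:1}.
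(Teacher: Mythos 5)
Your reduction is the same as the paper's---both proofs observe that by Theorem~\ref{thm:1} it suffices to show $\epsilon_k \to 0$, since $\gamma$, $\rho$, and $R_{\text{max}}$ are fixed---but the way you establish $\epsilon_k \to 0$ is genuinely different. The paper partitions $\mathcal E$ into the Voronoi cells $V_1,\dots,V_n$ induced by the sampled action embeddings and invokes a distribution-free result of Devroye et al.\ (their Theorem~4) stating that, for i.i.d.\ draws with full support, $\sup_i \text{diameter}(V_i) \to 0$ at rate $n^{-1/d}$; since $\epsilon_k \le \sup_i \text{diameter}(V_i)$, the claim follows. You instead take a finite $\delta/2$-cover of $\mathcal E$ and run a Borel--Cantelli argument to show every ball is eventually hit almost surely. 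Your route is more elementary and self-contained, and it is honest about the two hypotheses being smuggled in: total boundedness of (the relevant part of) $\mathcal E$, and a hitting-probability condition strong enough to drive Borel--Cantelli when the $\mathcal D_\tau$ vary with $\tau$. That second point is the one real soft spot in your sketch---full element-wise support of each $U_k$ gives $P(U_\tau \cap B_i \neq \emptyset \mid I_\tau = 1) > 0$ for each $\tau$ separately, but not the uniform lower bound $p_i > 0$ over all $\tau$ that your divergence-form Borel--Cantelli needs; you would have to either assume such uniformity or impose independence across updates. The paper's proof buys an explicit convergence rate but quietly assumes the embeddings are i.i.d.\ across all additions, which is likewise not implied by the corollary's stated hypotheses, so the two arguments carry comparable unstated assumptions. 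Either is acceptable; yours avoids the external citation at the cost of a slightly longer measure-theoretic preamble.
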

\begin{proof}
    See Appendix B. 
\end{proof}
Through Corollary \ref{cor:1}, we can now establish that the change in optimal performance will eventually converge to zero as new actions are repeatedly added. 
An intuitive way to observe this result would be to notice that every new action that becomes available indirectly provides more information about the underlying, unknown, structure of $\mathcal E$.
However, in the limit, as the size of the available action set increases, the information provided by each each new action vanishes and thus performance saturates.

Certainly, in practice, we can never have $k \rightarrow \infty$, but this result is still advantageous.
Even when the underlying structure $\mathcal E$, the set of distributions $\mathcal D$, the change frequency $\mathcal F$, and the mapping relation $\phi$ are all \textit{unknown}, it establishes the fact that the difference between the best performances in \textit{successive changes} will remain bounded and will not fluctuate arbitrarily.
%
%
This opens up new possibilities for developing algorithms that do 
not need to start from scratch after new actions are added, but rather can build upon their past experiences using updates to their existing policies that efficiently leverage estimates of the structure of $\mathcal E$ to adapt to new actions.  
%
%
%
\section{Learning with Changing Action Sets}
\label{sec:learning}
Theorem \ref{thm:1}  characterizes what \textit{can be} achieved in principle, however, it does not specify \textit{how} to achieve it---how to find $\pi_k^*$ efficiently.
Using any parameterized policy, $\pi$, which acts directly in the space of observed actions, suffers from one key practical drawback in the L-MDP setting.
That is, the parameterization is deeply coupled with the number of actions that are available. 
That is, not only is the meaning of each parameter coupled with the number of actions, but often the number of parameters that the policy has is dependent on the number of possible actions. 
This makes it unclear how the policy should be adapted when additional actions become available. 
A trivial solution would be to ignore the newly available actions and continue only using the previously available actions.
However, this is clearly myopic, and will prevent the agent from achieving the better long term returns that might be possible using the new actions.

To address this parameterization-problem, instead of having the policy, $\pi$, act directly in the observed action space, $\mathcal A$, we propose an approach wherein the agent reasons about the underlying structure of the problem in a way that makes its policy parameterization invariant to the number of actions that are available. 
To do so, we split the policy parameterization into two components.
The first component corresponds to the state conditional policy responsible for making the decisions, $\beta : \mathcal S \times \hat{\mathcal E} \rightarrow [0, 1]$, where $\hat {\mathcal E} \in \mathbb{R}^d$.
The second component corresponds to $\hat \phi : \hat{ \mathcal E} \times \mathcal A \rightarrow [0,1]$, an estimator of the relation $\phi$, which is used to map the output of $\beta$ to an action in the set of available actions.
That is, an $E_t \in \hat{\mathcal E}$ is sampled from $\beta(S_t, \cdot)$ and then $ \hat \phi(E_t)$ is used to obtain the action $A_t$. 
Together, $\beta$ and $\hat \phi$ form a complete policy, and $\hat {\mathcal E}$ corresponds to the inferred structure in action space.

One of the prime benefits of estimating $\phi$ with $\hat \phi$ 
is that it makes the parameterization of $\beta$ invariant to the cardinality of the action set---changing the number of available actions does not require changing the number of parameters of $\beta$.
%
%
Instead, only the parameterization of $\hat \phi$, the estimator of the underlying structure in action space, must 
be modified when new actions become available. 
We show next that the update to the parameters of $\hat \phi$ can be performed using \emph{supervised learning} methods that are independent of the reward signal and thus typically more efficient than RL methods. 
%
%
%

 %

%
%
%
 
While our proposed 
parameterization of the policy using both $\beta$ and $\hat \phi$ has the advantages described above, the performance of $\beta$ is now constrained by the quality of $\hat \phi$, as in the end $\hat \phi$ is responsible for selecting an action from $\mathcal A$.
Ideally we want $\hat \phi$ to be such that it lets $\beta$  be both: (a) invariant to the cardinality of the action set for practical reasons and (b) as expressive 
as a policy, $\pi$, explicitly parameterized for the currently available actions. 
Similar trade-offs have been considered in the context of learning optimal state-embeddings for representing sub-goals in hierarchical RL \citep{nachum2018near}. 
%
For our lifelong learning setting, we build upon their method to efficiently estimate $\hat \phi$ in a way that provides bounded sub-optimality. 
%
Specifically, we make use of an additional \textit{inverse dynamics} function, $\varphi$, that takes as input two states, $s$ and $s'$, and produces as output a prediction of which $e \in \mathcal E$ caused the transition from $s$ to $s'$. 
Since the agent does not know $\phi$, when it observes a transition from $s$ to $s'$ via action $a$, it does \emph{not} know which $e$ caused this transition. 
So, we cannot train $\varphi$ to make good predictions using the actual action, $e$, that caused the transition. 
Instead, we use $\hat \phi$ to transform the prediction of $\varphi$ from $e \in \mathcal E$ to $a \in \mathcal A$, and train both $\varphi$ and $\hat \phi$ so that this process accurately predicts which action, $a$, caused the transition from $s$ to $s'$. 
Moreover, rather than viewing $\varphi$ as a deterministic function mapping states $s$ and $s'$ to predictions $e$, we define $\varphi$ to be a \textit{distribution} over $\mathcal E$ given two states, $s$ and $s'$. 
%

For any given $\mathcal M_k$ in L-MDP $\mathscr L$, let $\beta_k$ and $\hat \phi_k$ denote the two components of the overall policy and let $\pi_k^{**}$ denote the best overall policy that can be represented using some fixed $\hat \phi_k$. 
The following theorem bounds the sub-optimality of $\pi_k^{**}$.
\begin{thm}
\label{thm:2} For an L-MDP $\mathcal M_k$,
If there exists a $\varphi : S \times S \times \hat{\mathcal E} \rightarrow [0, 1] $ and $\hat \phi_k : \hat {\mathcal E} \times \mathcal A \rightarrow [0, 1]$ such that 
\begin{align}
    \footnotesize
    \sup_{s \in \mathcal S, a \in \mathcal A} \text{KL}\Big(P(S_{t+1}|S_t=s, A_t=a) \Vert
    \\
     \,\,\,\,\,\,\,\,\,\,\,\,\,\,\,\,\,\,\, P(S_{t+1}|S_t=s, A_t=\hat A) \Big) &\leq \delta_k^2/2, \label{eqn:lemma1} 
\end{align}
where $\hat A 
\sim \hat \phi_k(\cdot|\hat E)$ and $\hat E \sim \varphi(\cdot | S_t, S_{t+1})$, then 
\begin{align*}
 v^{\mu^*}(s_0) -v^{\pi_k^{**}}(s_0)  &\leq \frac{\gamma \left( \rho \epsilon_k + \delta_k \right)}{(1 - \gamma)^2}  R_{\text{max}}.
\end{align*}
\end{thm}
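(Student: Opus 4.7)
The plan is to reduce the theorem to Theorem~\ref{thm:1} via a triangle inequality, treating $\pi_k^*$ (the best policy on $\mathcal A$) as an intermediate reference. Specifically, I would write
\[
v^{\mu^*}(s_0) - v^{\pi_k^{**}}(s_0) = \bigl[v^{\mu^*}(s_0) - v^{\pi_k^*}(s_0)\bigr] + \bigl[v^{\pi_k^*}(s_0) - v^{\pi_k^{**}}(s_0)\bigr],
\]
where the first bracket is already bounded by $\gamma \rho \epsilon_k R_{\max}/(1-\gamma)^2$ via Theorem~\ref{thm:1}. All the work is in bounding the second bracket by $\gamma \delta_k R_{\max}/(1-\gamma)^2$.

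To control that second bracket, I would exhibit a single admissible policy $\tilde\pi_k$ of the form $(\beta,\hat\phi_k)$ that ``simulates'' $\pi_k^*$; since $\pi_k^{**}$ is by definition the best such policy, $v^{\pi_k^{**}}(s_0) \ge v^{\tilde\pi_k}(s_0)$, and it suffices to bound $v^{\pi_k^*}(s_0) - v^{\tilde\pi_k}(s_0)$. The natural construction is to push $\pi_k^*$ through the decoder, setting
\[
\beta(e\mid s) \;=\; \sum_{a \in \mathcal A}\pi_k^*(a\mid s)\sum_{s'\in\mathcal S} P(s'\mid s,a)\,\varphi(e\mid s,s'),
\]
so that sampling from $\beta(\cdot\mid s)$ and then from $\hat\phi_k(\cdot\mid e)$ reproduces the marginal over $\hat A$ induced, for each $a$, by first drawing $S_{t+1}\sim P(\cdot\mid s,a)$ and then passing it through $\varphi$ and $\hat\phi_k$. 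By construction, the induced next-state kernel under $\tilde\pi_k$ at state $s$ is the $\pi_k^*$-mixture of the distributions $P(S_{t+1}\mid S_t=s,A_t=\hat A)$ appearing on the right-hand side of~\eqref{eqn:lemma1}.

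With that construction in place, I would use convexity of KL divergence in both arguments to pass from the pointwise hypothesis~\eqref{eqn:lemma1} to the mixture, giving $\text{KL}\!\left(P^{\pi_k^*}(\cdot\mid s)\,\Vert\,P^{\tilde\pi_k}(\cdot\mid s)\right) \le \delta_k^2/2$ for every $s$. Pinsker's inequality then converts this to a total-variation (equivalently, $\ell_1$) bound $\lVert P^{\pi_k^*}(\cdot\mid s) - P^{\tilde\pi_k}(\cdot\mid s)\rVert_1 \le \delta_k$, uniformly in $s$. A standard simulation/performance-difference lemma (two policies with identical rewards but transition kernels differing by at most $\delta_k$ in $\ell_1$) then yields $v^{\pi_k^*}(s_0) - v^{\tilde\pi_k}(s_0) \le \gamma \delta_k R_{\max}/(1-\gamma)^2$. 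Summing with the Theorem~\ref{thm:1} term produces exactly the claimed bound $\gamma(\rho\epsilon_k+\delta_k)R_{\max}/(1-\gamma)^2$.

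The main obstacle I anticipate is the step going from the hypothesis~\eqref{eqn:lemma1}, which bounds KL pointwise in $(s,a)$ after an \emph{internal} averaging over $S_{t+1}\sim P(\cdot\mid s,a)$, to a clean statement about the simulated transition kernel under $\tilde\pi_k$. One must interpret $P(S_{t+1}\mid S_t=s,A_t=\hat A)$ as the marginal next-state distribution obtained by integrating out the randomness in $\hat A$ (itself a function of an auxiliary $S_{t+1}$-draw), and then check that convexity of KL propagates correctly through the outer $\pi_k^*$-mixture. Once this identification is made carefully, the rest is a routine Pinsker-plus-simulation-lemma argument, and the factors of $\gamma/(1-\gamma)^2$ fall out exactly as in the bound on $v^{\mu^*}-v^{\pi_k^*}$.
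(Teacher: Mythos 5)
Your proposal is correct and follows the same skeleton as the paper's proof: the identical decomposition through $\pi_k^*$, Theorem~\ref{thm:1} for the first bracket, and a Pinsker-plus-state-distribution-shift argument (the bound $D(\pi_1,\pi_2)\le \gamma(1-\gamma)^{-1}\lVert (P^{\pi_1}-P^{\pi_2})d_{\pi_2}\rVert_1$ from the paper's Appendix~A, combined with the $(1-\gamma)^{-1}R_{\max}$ value-difference step) for the second. Where you genuinely diverge is in how the hypothesis~\eqref{eqn:lemma1} gets attached to $\pi_k^{**}$. The paper compares $P^{\pi_k^*}$ and $P^{\pi_k^{**}}$ directly and asserts $\sup_s \mathrm{KL}(P^{\pi_k^*}(\cdot\mid s)\Vert P^{\pi_k^{**}}(\cdot\mid s))\le \sup_{s,a}\mathrm{KL}(P(\cdot\mid s,a)\Vert P(\cdot\mid s,\hat a))$ with the gloss ``$a\sim\pi_k^*$ and $\hat a\sim\pi_k^{**}$''; this is informal, since $\pi_k^{**}$ is defined as the optimizer over $(\beta,\hat\phi_k)$ pairs and there is no a priori reason its induced kernel is the one the hypothesis controls. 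You instead construct an explicit admissible policy $\tilde\pi_k$ (pushing $\pi_k^*$ through $P$, $\varphi$, and $\hat\phi_k$), invoke joint convexity of KL to pass the pointwise bound through the $\pi_k^*$-mixture, and then use $v^{\pi_k^{**}}\ge v^{\tilde\pi_k}$ to transfer the bound. This is the rigorous way to fill in the step the paper leaves implicit, at the cost of a little bookkeeping about which marginal over $\hat A$ the hypothesis actually constrains. One cosmetic note: Pinsker gives $\mathrm{TV}\le\delta_k/2$, i.e.\ $\ell_1$-distance at most $\delta_k$, which is exactly what the simulation step consumes, so your constants land where the paper's do.
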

%
\begin{proof}
    See Appendix B. 
\end{proof}

By quantifying the impact $\hat \phi$ has on the sub-optimality of achievable performance, Theorem \ref{thm:2} provides the necessary constraints for estimating $\hat \phi$. 
%
%
At a high level, Equation \eqref{eqn:lemma1} ensures $\hat \phi$ to be such that it can be used to generate an action corresponding to any $s$ to $s'$ transition.
%
%
This allows $\beta$ to leverage $\hat \phi$ and choose the required action that induces the state transition needed for maximizing performance.
Thereby, following \eqref{eqn:lemma1}, sub-optimality would be minimized if $\hat \phi$ and $\varphi$ are optimized to reduce the supremum of KL divergence over all $s$ and $a$. 
In practice, however, the agent does not have access to all possible states, rather it has access to a limited set of samples collected from interactions with the environment.
Therefore, instead of the supremum, we propose minimizing the average over all $s$ and $a$ from a set of observed transitions,
\begin{align}
    \!\!\! \mathcal L(\hat \phi, \varphi) \!\!&\coloneqq \!\! \sum_{s \in \mathcal S}\! \sum_{a \in \mathcal A_k} P(s, a) \textit{KL}\left(P(s'|s, a) \Vert P(s'|s, \hat a) \right) . \label{eqn:exp_kl}
\end{align}
Equation \eqref{eqn:exp_kl}  suggests that $\mathcal L(\hat \phi, \varphi)$ would be minimized when $\hat a$ equals $a$, but using \eqref{eqn:exp_kl}  directly in the current form is inefficient as it requires computing KL over all probable $s' \in \mathcal S$ for a given $s$ and $a$.
To make it practical, we make use of the following property.
\begin{prop}
\label{prop:lb}
For some constant C, 
$- \mathcal L(\hat \phi, \varphi)$ is lower bounded by
\begin{align}
     \sum_{s \in \mathcal S} \sum_{a \in \mathcal A_k}\sum_{s' \in \mathcal S} P(s, a, s') \Bigg (  \mathbf{E}\left[\log \hat \phi(\hat a|\hat e) \middle | \hat e \sim \varphi(\cdot|s,s')  \right]
     \\
     - \text{KL}\Big(\varphi(\hat e|s, s') \Big \Vert P(\hat e|s, s')  \Big) \Bigg) + C. ~~~~~\label{eqn:beta_vae}
\end{align}
\end{prop}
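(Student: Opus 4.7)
The statement is precisely a conditional evidence-lower-bound (ELBO), with $\hat e$ playing the role of a latent variable, $a$ the observation, and $(s,s')$ the context, so the plan is to derive it in the standard VAE style. First, I would expand the KL divergence inside $\mathcal L(\hat\phi,\varphi)$ and peel off the constant corresponding to the true transition entropy:
\begin{align*}
-\mathcal L(\hat\phi,\varphi)
&= \sum_{s,a,s'} P(s,a,s')\log P(s'\,|\,s,\hat a) - \sum_{s,a,s'} P(s,a,s')\log P(s'\,|\,s,a) \\
&= \sum_{s,a,s'} P(s,a,s')\log P(s'\,|\,s,\hat a) + C_0,
\end{align*}
where $C_0$ is the conditional entropy $H(S'\,|\,S,A)$ of the true dynamics and is independent of $\hat\phi$ and $\varphi$. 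All subsequent work goes into lower bounding the remaining sum.

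Next, I would apply Bayes' rule under the joint distribution induced by the generative model (prior $P(\hat e\,|\,s,s')$ and emission $\hat\phi(\cdot\,|\,\hat e)$) to rewrite
\begin{align*}
\log P(s'\,|\,s,\hat a) = \log P(\hat a\,|\,s,s') + \log P(s'\,|\,s) - \log P(\hat a\,|\,s),
\end{align*}
absorbing the state-marginal $\log P(s'\,|\,s)$ and, under a mild marginal action-prior assumption, the $\log P(\hat a\,|\,s)$ term into the constant. The remaining quantity $P(\hat a\,|\,s,s') = \int \hat\phi(\hat a\,|\,\hat e)\,P(\hat e\,|\,s,s')\,d\hat e$ is intractable, so I would introduce $\varphi$ as a variational posterior and apply Jensen's inequality exactly as in the standard VAE derivation:
\begin{align*}
\log P(\hat a\,|\,s,s')
&= \log \int \varphi(\hat e\,|\,s,s')\,\frac{\hat\phi(\hat a\,|\,\hat e)\,P(\hat e\,|\,s,s')}{\varphi(\hat e\,|\,s,s')}\,d\hat e \\
&\geq \mathbf{E}_{\hat e \sim \varphi(\cdot|s,s')}\!\left[\log \hat\phi(\hat a\,|\,\hat e)\right] - \text{KL}\!\left(\varphi(\hat e\,|\,s,s')\,\Vert\,P(\hat e\,|\,s,s')\right).
\end{align*}
Substituting this bound, weighting by $P(s,a,s')$, and pooling all $\hat\phi,\varphi$-independent pieces into a single constant $C$ yields exactly the claimed lower bound.

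The main obstacle is interpretational rather than computational: inside $P(s'\,|\,s,\hat a)$, the action $\hat a$ is itself generated stochastically from $(s,s')$ via $\varphi$ and $\hat\phi$, so this is really a model-induced marginal rather than a transition probability evaluated at a fixed action. Fixing this interpretation up front (viewing $\hat e$ as a latent that explains the transition $s\to s'$ and $\hat a$ as a reconstruction of $a$ through the decoder $\hat\phi$) is what makes the Bayes-rule bookkeeping clean; once that is in place, the remaining steps reduce to routine VAE-style manipulations.
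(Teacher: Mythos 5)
Your derivation follows essentially the same route as the paper's Appendix C: expand the KL to peel off the entropy constant, rewrite $\log P(s'\,|\,s,\hat a)$ via Bayes' rule, and then lower bound $\log P(\hat a\,|\,s,s')$ with the standard Jensen/ELBO step using $\varphi$ as the variational posterior and $\hat\phi$ as the decoder. The one place you deviate is in handling the $-\log P(\hat a\,|\,s)$ term, and that is where your argument as written has a small but real hole: you propose to absorb it into the constant ``under a mild marginal action-prior assumption,'' but $P(\hat a\,|\,s)$ is the marginal of an action that is \emph{generated through} $\varphi$ and $\hat\phi$, so it depends on exactly the quantities being optimized and is not a constant under any natural reading. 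An unstated assumption that makes it constant would need to constrain the model itself, which defeats the purpose.

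The paper's fix is simpler and assumption-free: writing $\log P(s'\,|\,s,\hat a) = \log P(s,\hat a,s') - \log Z$ with $Z = P(s,\hat a) \le 1$, the term $-\log Z$ is nonnegative, so it can simply be \emph{dropped} while preserving the direction of the lower bound; the remaining $\log P(s,\hat a,s') = \log P(\hat a\,|\,s,s') + \log P(s,s')$ then contributes only the genuinely model-independent constant $\log P(s,s')$. The same move works verbatim in your decomposition: $-\log P(\hat a\,|\,s) \ge 0$, so discarding it only lowers the bound, which is all you need. With that one-line repair your proof coincides with the paper's; everything else, including the Jensen step and the identification of $\hat\phi$ and $\varphi$ with the decoder and encoder, is correct and matches the published argument.
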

\begin{proof}
See Appendix C. 
\end{proof}
As minimizing $\mathcal L(\hat \phi, \varphi)$ is equivalent to maximizing $- \mathcal L(\hat \phi, \varphi)$, we consider maximizing the lower bound obtained from Property \ref{prop:lb}.
In this form, it is now practical to optimize \eqref{eqn:beta_vae} just by using the observed $(s, a, s')$ samples. 
As this form is similar to the objective for variational auto-encoder, inner expectation can be efficiently optimized using the reparameterization trick \citep{kingma2013auto}.
$P(\hat e | s, s')$ is the prior on $\hat e$, and we treat it as a hyper-parameter that allows the KL to be computed in closed form.

Importantly, note that this optimization procedure only requires individual transitions, $s,a,s'$, 
and is independent of the reward signal. 
Hence, at its core, it is a supervised learning procedure. 
This means that learning good parameters for $\hat \phi$ tends to require far fewer samples than optimizing $\beta$ (which is an RL problem). 
This is beneficial for our approach because $\hat \phi$, the component of the policy where new parameters need to be added when new actions become available, 
can be updated efficiently.
As both $\beta$ and $\varphi$ are invariant to action cardinality, they do not require new parameters when new actions become available. 
Additional implementation level details are available in Appendix F. 

\section{Algorithm}
\label{sec:algo}
 \begin{figure}[t]
        \centering
		\includegraphics[width=0.37\textwidth]{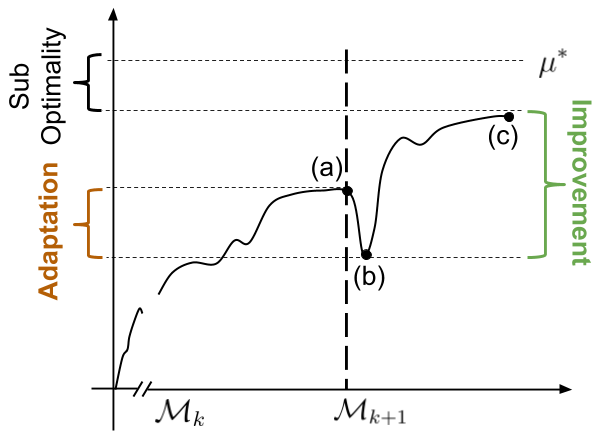}
		\caption{An illustration of a typical performance curve for a lifelong learning agent.
		The point $(a)$ corresponds to the performance of the current policy in $\mathcal M_k$.
		The point $(b)$ corresponds to the performance drop resulting as a consequence of adding new actions.
		We call the phase between (a) and (b) as the adaptation phase, which aims at minimizing this drop when adapting to new set of actions.
		The point $(c)$ corresponds to the improved performance in $\mathcal M_{k+1}$ by optimizing the policy to leverage the new set of available actions. 
		$\mu^*$ represents the best performance of the hypothetical policy which has access to the entire structure in the action space.
		}
		\label{Fig:LAICA}
\end{figure}

 When a new set of actions, $\mathcal A_{k+1}$, becomes available, the agent should leverage the existing knowledge and quickly adapt to the new action set.
 Therefore, during every change in $\mathcal M_k$, the ongoing best components of the policy, $\beta_{k-1}^*$ and $\phi_{k-1}^*$, in $\mathcal M_{k-1}$ are carried over, i.e., $\beta_k \coloneqq \beta_{k-1}^*$ and $\hat \phi_k \coloneqq \hat \phi_{k-1}^*$.
 For lifelong learning, the following property illustrates a way to organize the learning procedure so as to minimize the sub-optimality in each $\mathcal M_k$, for all $k$.  
 \begin{prop} (Lifelong Adaptation and Improvement)\label{prop:LAICA}
 In an L-MDP,  let $\Delta$ denote the difference of performance between $v^{\mu^*}$ and the best achievable using our policy parameterization, then the overall sub-optimality can be expressed as, 
  \begin{align}
    v^{\mu^*}(s_0) - v_{_{\mathcal M_1}}^{\beta_1 \hat \phi_1}(s_0) =&  \underbrace{\sum_{k=1}^{\infty}\left(v_{_{\mathcal M_k}}^{\beta_{k} \hat \phi_k^*}(s_0) - v_{_{\mathcal M_{k}}}^{\beta_{k}  \hat\phi_{k}}(s_0) \right)}_{\text{Adaptation}} \\
    &+ \underbrace{\sum_{k=1}^{\infty}\left(v_{_{\mathcal M_k}}^{\beta_k^*  \hat\phi_k^*}(s_0) - v_{_{\mathcal M_k}}^{\beta_k  \hat\phi_k^*}(s_0) \right)}_{\text{Policy Improvement}} + \Delta, 
 \label{eqn:adapt_improve}
  \end{align} 
  where $\mathcal M_k$ is used in the subscript to emphasize the respective MDP in $\mathscr L$. \textbf{
Proof:} See Appendix D.
 \end{prop}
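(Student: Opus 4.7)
The plan is to combine an add-and-subtract decomposition inside each MDP $\mathcal M_k$ with a telescoping sum across $k$. For each $k$, inserting the intermediate value $v_{\mathcal M_k}^{\beta_k\hat\phi_k^*}(s_0)$ splits the per-$k$ gap as
\[
v_{\mathcal M_k}^{\beta_k^*\hat\phi_k^*}(s_0) - v_{\mathcal M_k}^{\beta_k\hat\phi_k}(s_0) = \Big(v_{\mathcal M_k}^{\beta_k^*\hat\phi_k^*}(s_0) - v_{\mathcal M_k}^{\beta_k\hat\phi_k^*}(s_0)\Big) + \Big(v_{\mathcal M_k}^{\beta_k\hat\phi_k^*}(s_0) - v_{\mathcal M_k}^{\beta_k\hat\phi_k}(s_0)\Big),
\]
where the first piece holds the adapted $\hat\phi_k^*$ fixed and improves only $\beta_k \to \beta_k^*$ (Policy Improvement), and the second holds $\beta_k$ fixed and improves only $\hat\phi_k \to \hat\phi_k^*$ (Adaptation). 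These are exactly the two summands appearing in the statement.

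The key step is the telescoping identity $v_{\mathcal M_{k+1}}^{\beta_{k+1}\hat\phi_{k+1}}(s_0) = v_{\mathcal M_k}^{\beta_k^*\hat\phi_k^*}(s_0)$. I would argue it as follows: by the carry-over convention $\beta_{k+1} \coloneqq \beta_k^*$ and $\hat\phi_{k+1} \coloneqq \hat\phi_k^*$, so the initial policy in $\mathcal M_{k+1}$ is bit-for-bit the final policy of $\mathcal M_k$, and its action range is contained in $\mathcal A_k$. Since $\mathcal M_{k+1}$ inherits everything except the available action set from $\mathcal M_k$, and the underlying transition law $P(s' \mid s, e)$ acts on action embeddings rather than on labels, this carried-over policy induces the same distribution over trajectories in either MDP, hence identical expected returns.

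Combining the two ingredients and summing over $k=1,\ldots,K$, the series telescopes to $v_{\mathcal M_K}^{\beta_K^*\hat\phi_K^*}(s_0) - v_{\mathcal M_1}^{\beta_1\hat\phi_1}(s_0)$. Letting $K\to\infty$ and identifying $\Delta \coloneqq v^{\mu^*}(s_0) - \lim_{K\to\infty} v_{\mathcal M_K}^{\beta_K^*\hat\phi_K^*}(s_0)$ as the residual gap between $v^{\mu^*}$ and what the parameterization $(\beta,\hat\phi)$ can ever achieve yields the identity in the statement. The main obstacle is justifying the telescoping identity cleanly: everything else is bookkeeping, but this identity rests on the semantic fact that the carried-over $\hat\phi_k^*$ places no mass on actions in $\mathcal A_{k+1}\setminus\mathcal A_k$ until adaptation occurs, so extending the action set is truly a no-op for the induced trajectory distribution. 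Existence of the limit and finiteness of $\Delta$ then follow from $R_t\in[-R_\text{max},R_\text{max}]$ and $\gamma<1$, together with the shrinking-gap guarantees of Theorem~\ref{thm:1} and Corollary~\ref{cor:1}.
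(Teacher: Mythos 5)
Your proposal is correct and follows essentially the same route as the paper's Appendix D proof: the per-$k$ add-and-subtract decomposition into adaptation and policy-improvement terms, combined with the identity $v_{_{\mathcal M_{k+1}}}^{\beta_{k+1}\hat\phi_{k+1}}(s_0)=v_{_{\mathcal M_k}}^{\beta_k^*\hat\phi_k^*}(s_0)$ from the carry-over convention, is exactly the paper's iterative ``unravelling'' written as an explicit telescoping sum. Your added justification of that identity (the carried-over $\hat\phi_k^*$ puts no mass on the newly added actions, so the induced trajectory distribution is unchanged) is a point the paper leaves implicit, but it does not change the argument.
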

%
%

Property \ref{prop:LAICA} illustrates a way to understand the impact of $\beta$ and $\hat \phi$ by splitting the learning process into an adaptation phase and a policy improvement phase.
These two iterative phases are the crux of our algorithm for solving an L-MDP $\mathscr L$.
Based on this principle, we call our algorithm LAICA: \textit{lifelong adaptation and improvement for changing actions}.
Due to space constraints, we now briefly discuss the LAICA algorithm; a detailed description with pseudocode is presented in Appendix E. 

Whenever new actions become available, adaptation is prone to cause a performance drop as the agent has no information about when to use the new actions, and so its initial uses of the new actions may be at inappropriate times. 
%
%
%
Following Property \ref{prop:lb}, we update $\hat \phi$ so as to efficiently infer the underlying structure and minimize this drop.
That is, for every $\mathcal M_k$, $ \hat\phi_k$ is first adapted to $ \hat \phi_k^*$ in the adaptation phase by adding more parameters for the new set of actions and then optimizing \eqref{eqn:beta_vae}.
After that, $ \hat \phi_k^*$ is fixed and $\beta_k$ is improved towards $\beta_k^*$ in the policy improvement phase, by updating the parameters of $\beta_k$ using the policy gradient theorem \citep{sutton2000policy}.
These two procedures are performed sequentially whenever $\mathcal M_{k-1}$ transitions to $\mathcal M_k$, for all $k$, in an L-MDP $\mathscr L$.
An illustration of the procedure is presented in Figure \ref{Fig:LAICA}.

    \begin{figure*}[t]
    		\centering
    		\includegraphics[width=0.8\textwidth]{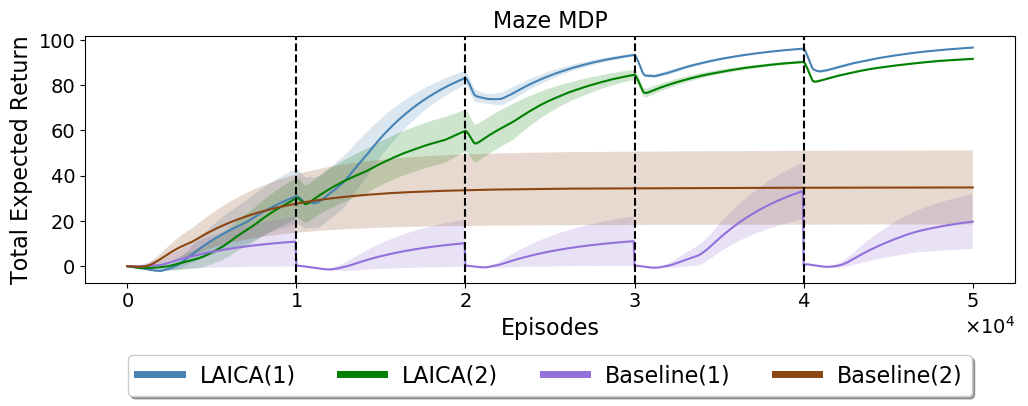}
    		\caption{Lifelong learning experiments with a changing set of actions in  the maze domain. 
    		The learning curves correspond to the running mean of the best performing setting for each of the algorithms.
    		The shaded regions correspond to standard error obtained using $10$ trials.
    		Vertical dotted bars indicate when the set of actions was changed.
    		%
    }
    		\label{Fig:CL-experiments}
	\end{figure*}
 
%


%
%
%
A step-by-step pseudo-code for the LAICA algorithm is available in Algorithm 1, Appendix E. 
The crux of the algorithm is based on the iterative adapt and improve procedure obtained from Property \ref{prop:LAICA}.

\section{Empirical Analysis}

            \begin{figure*}[t]
    		\centering
    		\includegraphics[width=0.8\textwidth]{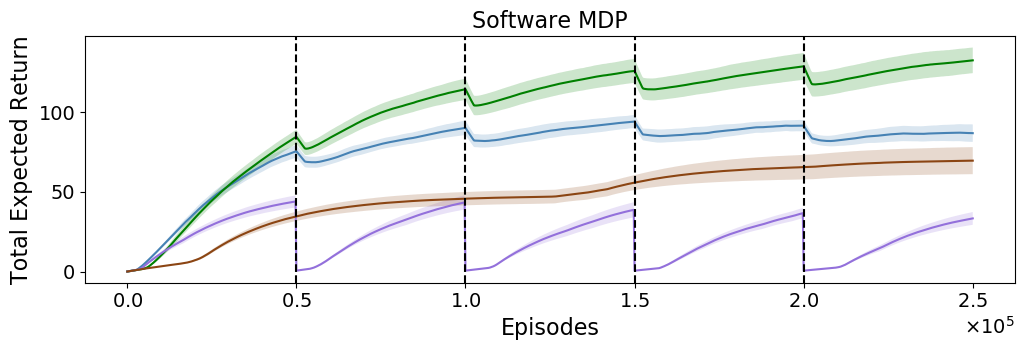}\\
    		\includegraphics[width=0.8\textwidth]{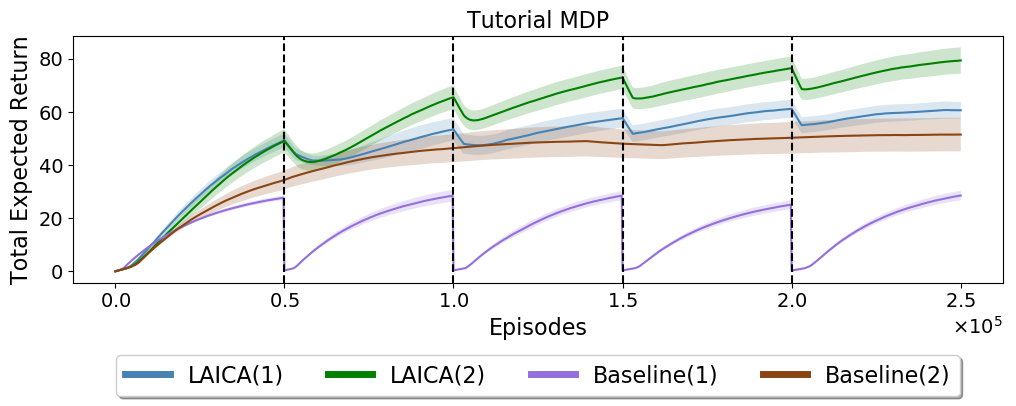}
    		\caption{Lifelong learning experiments with a changing set of actions in the recommender system domains. 
    		The learning curves correspond to the running mean of the best performing setting for each of the algorithms.  
    		The shaded regions correspond to standard error obtained using $10$ trials.
    		Vertical dotted bars indicate when the set of actions was changed.
    		}
    		\label{Fig:CL-experiments2}
		\end{figure*}
\label{sec:empirical}
        In this section, we aim to empirically compare the following methods, 
        \begin{itemize}
            \item Baseline(1): The policy is re-initialised  and the agent learns from scratch after every change.
            \item Baseline(2): New parameters corresponding to new actions are added/stacked to the existing policy (and previously learned parameters are carried forward as-is).
            \item LAICA(1): The proposed approach that leverages the structure in the action space. To act in continuous space of inferred structure, we use DPG \citep{silver2014deterministic} to optimize $\beta$.
            \item LAICA(2): A variant of LAICA which uses an actor-critic \citep{sutton2018reinforcement} to optimize $\beta$.
        \end{itemize}
        
    	To demonstrate the effectiveness of our proposed method(s) on lifelong learning problems, we consider a maze environment and two domains corresponding to real-world applications, all with a large set of changing actions. 
		For each of these domains, the total number of actions  were randomly split into five equal sets.
		Initially, the agent only had the actions available in the first set and after every change the next set of actions was made available additionally.
        In the following paragraphs we briefly outline the domains; full details are deferred to Appendix F. 
        
        \paragraph{Maze Domain.}
        As a proof-of-concept, we constructed a continuous-state maze environment where the state is comprised of the coordinates of the agent's location and its objective is to reach a fixed goal state. 
        The agent has a total of $256$ actions corresponding to displacements in different directions of different magnitudes.
        This domain provides a simple yet challenging testbed that requires solving a long horizon task using a large, changing action set, in presence of a single goal reward.
    	%
           
    	\paragraph{Case Study: Real-World Recommender Systems. } 
    	We consider the following two real-world applications of large-scale recommender systems that require decision making over multiple time steps and where the number of possible decisions varies over the lifetime of the system.
    	%
    	%
    	\begin{itemize}
    	    \item A web-based video-tutorial platform, that has a recommendation engine to suggest a series of tutorial videos. The aim is to meaningfully engage the users in a learning activity.
    	    %
    	    %
    	    In total, $1498$ tutorials were considered for recommendation.
            \item A professional multi-media editing software, where sequences of tools inside the software need to be recommended. The aim is to increase user productivity and assist users in quickly achieving their end goal. In total, $1843$ tools were considered for recommendation.  
    	\end{itemize}
    	
        For both of these applications, an existing log of user's click stream data was used to create an n-gram based MDP model for user behavior \citep{shani2005mdp}.
        Sequences of user interaction were aggregated to obtain over $29$ million clicks and $1.75$ billion user clicks for the tutorial recommendation and the tool recommendation task, respectively.    
        The MDP had continuous state-space, where each state consisted of the feature descriptors associated with each item (tutorial or tool) in the current n-gram. 
        %
        \subsection{Results.}
    	The plots in Figures \ref{Fig:CL-experiments} and \ref{Fig:CL-experiments2} present the evaluations on the domains considered.
    	The advantage of LAICA over Baseline(1) can be attributed to its policy parameterization.
    	The decision making component of the policy, $\beta$, being invariant to the action cardinality can be readily leveraged after every change without having to be re-initialized. 
    	This demonstrates that efficiently re-using past knowledge can improve data efficiency over the approach that learns from scratch every time.

	    Compared to Baseline(2), which also does not start from scratch and reuses existing policy, we notice that the variants of LAICA algorithm still perform favorably.
    	As evident from the plots in Figures \ref{Fig:CL-experiments} and \ref{Fig:CL-experiments2}, while Baseline(2) does a good job of preserving the existing policy, it fails to efficiently capture the benefit of new actions.
    	%
    	%
    	%
    	While the policy parameters in both LAICA and Baseline(2) are improved using policy gradients,  the superior performance of LAICA can be attributed to the adaptation procedure incorporated in LAICA which aims at efficiently inferring the underlying structure in the space of actions.
    	Overall LAICA(2) performs almost twice as well as both the baselines on all of the tasks considered. 
    	In the maze domain, even the best setting for Baseline(2) performed inconsistently.  
    	Due to the sparse reward nature of the task, which only had a big positive reward on reaching goal, even the best setting for Baseline(2) failed on certain trials, resulting in high variance.

    	Note that even before the first addition of the new set of actions, the proposed method performs better than the baselines.
    	This can be attributed to the fact that the proposed method efficiently leverages the underlying structure in the action set and thus learns faster.
    	Similar observations have been made previously \citep{dulac2015deep,he2015deep,bajpai2018transfer}.
    	
    	In terms of computational cost, the proposed method updates the inverse dynamics model and the underlying action structure only when there is a change in the action set (Algorithm 1).
    	Therefore, compared to the baselines, no extra computational cost is required for training at \textit{each} time-step.
    	However, the added computational cost does impact the  \textit{overall} learning process and is proportional to the number of times new actions are introduced.

\section{Discussion and Conclusion}
In this work we established first steps towards developing the lifelong MDP setup for dealing with action sets that change over time.
Our proposed approach then leveraged the structure in the action space such that an existing policy can be efficiently adapted to the new set of available actions.
Superior performances on both synthetic and large-scale real-world environments demonstrate the benefits of the proposed LAICA algorithm.
%

To the best of our knowledge, this is the first work to take a step towards addressing the problem of lifelong learning with a changing action set.
We hope that this brings more attention to such understudied problems in lifelong learning.
There are several important challenges open for future investigation.

In many real-world applications, often due to external factors, some actions are removed over time as well.
    For example, if a medicine becomes outdated, if a product is banned, etc.
    While our applications were devoid of this aspect, the proposed algorithm makes use of a policy parameterization that is invariant to the cardinality of the action set, and thus can support both addition and removal.
    Our proposed policy decomposition method can still be useful for selecting an available action whose impact on the state transition is most similar to the removed action.

    There can be several applications where new actions that are added over time have no relation to the previously observed actions. 
    For example, completely new product categories, tutorial videos on new topics, etc. 
    In such cases, it is unclear how to leverage past information efficiently.
    We do not expect our proposed method to work well in such settings.

%
%
 %
 %
 %

\section{Acknowledgement}

The research was supported by and partially conducted at Adobe
Research. 
We are also immensely grateful to the three anonymous reviewers who shared their insights and feedback. 

{\footnotesize
\bibliographystyle{aaai}
\bibliography{bibliography}
}

\clearpage
\appendix
\onecolumn
\section*{\centering Lifelong Learning with a Changing Action Set \\(Supplementary Material)}

\setcounter{lemma}{0}
\setcounter{thm}{0}
\setcounter{cor}{0}
\setcounter{prop}{1}

\section{A: Preliminary}
For the purpose of our results, we would require bounding the shift in the state distribution between two policies.
Techniques for doing so has been previously studied in literature \citep{kakade2002approximately,kearns2002near,pirotta2013safe,achiam2017constrained}.
Specifically, we cover this preliminary result based on the work by \citet{achiam2017constrained}.

The discounted state distribution, for all $s \in \mathcal S$, for a policy $\pi$ is given by,

\begin{align}
  d_\pi(s) = (1 - \gamma)\sum_t \gamma^t P(S_t=s|\pi). \label{eqn:dist}  
\end{align}
Let the shift in state distribution between any given policies $\pi_1$ and $\pi_2$ be denoted as $D(\pi_1, \pi_2)$, such that
\begin{align}
    D(\pi_1, \pi_2) &=    \int_{\mathcal{S}} \left | d_{\pi_1}(s) -  d_{\pi_2}(s) \right |  \mathrm{d}s  
    \\
    &=  \int_{\mathcal{S}} \left | (1 - \gamma)\sum_t \gamma^t P(S_t=s|\pi_1) - (1 - \gamma)\sum_t \gamma^t P(S_t=s|\pi_2)  \right | \mathrm{d}s . \label{eqn:3}
\end{align}
For any policy $\pi$, let $P^{\pi}$ denote the matrix corresponding to transition probabilities as a result of $\pi$.    
Then \eqref{eqn:3} can be re-written as,
\begin{align}
      D(\pi_1, \pi_2) &=  \left \lVert  (1 - \gamma)(1 - \gamma P^{\pi_1})^{-1}d_0 - (1 - \gamma)(1 - \gamma P^{\pi_2})^{-1}d_0   \right \rVert_1
    \\
    &=   \left \lVert (1 - \gamma) \middle ( (1 - \gamma P^{\pi_1})^{-1} - (1 - \gamma P^{\pi_2})^{-1} \middle)  d_0  \right \rVert_1. \label{eqn:qw}
\end{align}    
To simplify \eqref{eqn:qw}, let $G_1 =  (1 - \gamma P^{\pi_1})^{-1}$ and $G_2 = (1 - \gamma P^{\pi_2})^{-1} $.
\\
Then, 
$ G_1 - G_2 =  G_1(G_2^{-1} - G_1^{-1})G_2 $, and therefore \eqref{eqn:qw} can be written as,
\begin{align}    
     D(\pi_1, \pi_2) &= \left \Vert (1 - \gamma) \middle ( (1 - \gamma P^{\pi_1})^{-1} ((1 - \gamma P^{\pi_2})  - (1 - \gamma P^{\pi_1}))(1 - \gamma P^{\pi_2})^{-1} \middle)  d_0  \right \rVert_1
    \\
    &=  \left \lVert (1 - \gamma)\middle ( (1 - \gamma P^{\pi_1})^{-1} (\gamma P^{\pi_1} - \gamma P^{\pi_2})(1 - \gamma P^{\pi_2})^{-1} \middle)  d_0   \right \rVert_1
    \\
    &= \left \lVert \middle ( (1 - \gamma P^{\pi_1})^{-1}  (\gamma P^{\pi_1} - \gamma P^{\pi_2}) \middle) d_{\pi_2}  \right \rVert_1. \label{eqn:4}
\end{align}
Note that using matrix L1 norm,
\begin{align}
    \left \lVert(1 - \gamma P^{\pi_1})^{-1} \right \rVert_1 =  \left \lVert\sum_t (\gamma P^{\pi_1})^t \right \rVert_1 \leq  \sum_t \gamma^t \left \lVert P^{\pi_1^t} \right \rVert_1 = \sum_t \gamma^t \cdot \mathbf{1} = (1-\gamma)^{-1} \label{eqn:123}.
\end{align}
Combining \eqref{eqn:123} and \eqref{eqn:4},
\begin{align}
    D(\pi_1, \pi_2) &\leq \gamma (1- \gamma)^{-1} \left \lVert (P^{\pi_1} - P^{\pi_2}) d_{\pi_2}\right \rVert_1 . \label{eqn:5}
\end{align}

\section{B: Sub-Optimality}


\label{apx:sub-opt}
\begin{thm}
In an L-MDP, let $\epsilon_k$ denote the maximum distance in the underlying structure of the closest pair of available actions, i.e.,  
$\epsilon_k \coloneqq \underset{a_i \in \mathcal A}{\text{sup}} ~ \underset{a_j \in \mathcal A}{\text{inf}}  \lVert e_i - e_j \rVert_1$, 
then
\begin{align}
    v^{\mu^*}(s_0) -v^{\pi^*_k}(s_0)   &\leq \frac{\gamma  \rho \epsilon_k}{(1 - \gamma)^2}  R_{\text{max}}.  
\end{align}
\end{thm}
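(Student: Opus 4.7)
The plan is to compare $\mu^*$ to an intermediate, auxiliary policy $\tilde\pi_k$ in $\mathcal M_k$ which mimics $\mu^*$ but is restricted to the available action set $\mathcal A$. Specifically, for each state $s$, whenever $\mu^*$ would select some $e \in \mathcal E$, I let $\tilde\pi_k$ pick a nearest available action $a = \phi(e^\prime)$ with $e^\prime$ minimizing $\|e - e^\prime\|_1$ over the pre-images of $\mathcal A$. By the definition of $\epsilon_k$, this can be done so that $\|e - e^\prime\|_1 \leq \epsilon_k$ for every state-action pair used by $\mu^*$. Since $\pi_k^*$ is optimal on $\mathcal M_k$ and $\tilde\pi_k$ only uses actions in $\mathcal A$, we have $v^{\pi_k^*}(s_0) \geq v^{\tilde\pi_k}(s_0)$, so it suffices to bound $v^{\mu^*}(s_0) - v^{\tilde\pi_k}(s_0)$.

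For that bound, I would exploit two facts. First, because the reward depends only on state, one can write $v^\pi(s_0) = (1-\gamma)^{-1} \mathbf{E}_{s\sim d_\pi}[\mathcal R(s)]$, so
\begin{align}
\bigl| v^{\mu^*}(s_0) - v^{\tilde\pi_k}(s_0) \bigr| \leq \frac{R_{\max}}{1-\gamma}\, D(\mu^*, \tilde\pi_k),
\end{align}
where $D$ is the $L_1$ shift in discounted state distributions introduced in Appendix A. Second, the preliminary calculation in Appendix A yields
\begin{align}
D(\mu^*, \tilde\pi_k) \leq \frac{\gamma}{1-\gamma}\, \bigl\| (P^{\mu^*} - P^{\tilde\pi_k})\, d_{\tilde\pi_k} \bigr\|_1.
\end{align}
Now I would use the Lipschitz assumption \eqref{eqn:lipschitz} pointwise: for every $s$,
\begin{align}
\bigl\| P^{\mu^*}(\cdot \mid s) - P^{\tilde\pi_k}(\cdot \mid s) \bigr\|_1 \leq \rho\, \mathbf{E}_{e \sim \mu^*(\cdot|s)}\bigl[\|e - e^\prime(e,s)\|_1\bigr] \leq \rho \epsilon_k,
\end{align}
where $e^\prime(e,s)$ is the surrogate action used by $\tilde\pi_k$. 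Taking the expectation of this pointwise bound under the probability vector $d_{\tilde\pi_k}$ gives $\|(P^{\mu^*} - P^{\tilde\pi_k})\, d_{\tilde\pi_k}\|_1 \leq \rho \epsilon_k$. Chaining the three inequalities yields $v^{\mu^*}(s_0) - v^{\tilde\pi_k}(s_0) \leq \gamma \rho \epsilon_k R_{\max} / (1-\gamma)^2$, which combined with $v^{\tilde\pi_k}(s_0) \leq v^{\pi_k^*}(s_0)$ gives the theorem.

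I expect the main obstacle to be the careful construction and handling of the surrogate policy $\tilde\pi_k$ when $\mu^*$ is stochastic and when $\phi$ is not injective: one needs a measurable selection of an $\epsilon_k$-close available action for every $(s, e)$, and one must verify that the Lipschitz step above passes through an expectation over $\mu^*(\cdot|s)$ rather than just applying at a single $e$. A secondary, mostly bookkeeping, concern is ensuring that the matrix-norm argument from Appendix A (written for finite $\mathcal S$) carries over — the paper's footnote already signals that the extension to continuous $\mathcal S$ is meant to hold, so I would simply replace matrix products with the corresponding Markov-kernel integrals and use dominated convergence to justify the same telescoping identity $G_1 - G_2 = G_1(G_2^{-1} - G_1^{-1}) G_2$ used there.
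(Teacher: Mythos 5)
Your proposal is correct and follows essentially the same route as the paper's proof in Appendix B: the paper also introduces the surrogate policy (there called $\mu_k^*$) that maps each action of $\mu^*$ to its nearest available action, uses $v^{\mu_k^*} \leq v^{\pi_k^*}$, bounds the value gap via the discounted-state-distribution shift $D(\mu^*,\mu_k^*)$ from Appendix A, and finishes with the Lipschitz condition to obtain $\rho\epsilon_k$. Your version of the final step, passing the bound through the expectation over $\mu^*(\cdot\mid s)$ rather than a bare supremum, is if anything slightly more careful than the paper's.
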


\begin{proof}
We begin by defining $\mu_k^*$ to be a policy where the actions of the policy $\mu^*$ is restricted to the actions available in $\mathcal M_k$.
That is, any action $e_i$ from $\mu^*$ is mapped to the closest $e_j$, where $a = \phi(e_j)$ is in the available action set.
Notice that the best policy, $\pi_k^*$, using the available set of actions is always better than or equal to $\mu_k^*$, i.e., $v^{\mu^*_k} \leq v^{\pi^*_k}$.
Therefore ,
\begin{align}
     v^{\mu^*}(s_0) -v^{\pi^*_k}(s_0)  &\leq \left \vert v^{\mu^*}(s_0) -v^{\mu^*_k}(s_0) \right |. \label{eqn:muk} 
\end{align}
On expanding the $v(s_0)$ corresponding for both the policies in \eqref{eqn:muk} using \eqref{eqn:dist},
\begin{align}
     v^{\mu^*}(s_0) -v^{\pi^*_k}(s_0) | &\leq \left | (1- \gamma)^{-1} \int_{\mathcal{S}} d_{\mu^*}(s)R(s)\mathrm{d}s - (1- \gamma)^{-1} \int_{\mathcal{S}} d_{\mu_k^*}(s)R(s)\mathrm{d}s \right |
    \\
    &= \left | (1- \gamma)^{-1} \int_{\mathcal{S}} \middle (d_{\mu^*}(s) -  d_{\mu_k^*}(s) \middle) R(s)\mathrm{d}s  \right |. \label{eqn:2}
\end{align}
We can then upper bound \eqref{eqn:2} by taking the maximum possible reward common,
\begin{align}
    v^{\mu^*}(s_0) -v^{\pi^*_k}(s_0)  &\leq  \left | (1- \gamma)^{-1} R_{\text{max}}\int_{\mathcal{S}} \middle (d_{\mu^*}(s) -  d_{\mu_k^*}(s) \middle) \mathrm{d}s  \right |
    \\
    &\leq  (1- \gamma)^{-1} R_{\text{max}}\int_{\mathcal{S}} \left | d_{\mu^*}(s) -  d_{\mu_k^*}(s)  \right | \mathrm{d}s 
    \\
    &=  (1- \gamma)^{-1} R_{\text{max}}D(\mu^*, \mu_k^*) 
    \\
     &\leq  \gamma(1- \gamma)^{-2} R_{\text{max}} \left \lVert (P^{\mu_k} - P^{\mu_k^*}) d_{\mu_k^*}\right \rVert_1 \label{eqn:wq}
\end{align}
For any action $\bar e$ taken by the policy $\mu^*$, let $\bar e_k$ denote the action for $\mu_k^*$ obtained by mapping $\bar e$ to the closest action in the available set,
then expanding \eqref{eqn:wq}, we get, 
\begin{align}
    v^{\mu^*}(s_0) -v^{\pi^*_k}(s_0) 
    &\leq  \gamma(1- \gamma)^{-2} R_{\text{max}}\left(\underset{s}{\text{sup}} \left  \lVert P^{\mu_k}(s) - P^{\mu_k^*}(s) \right \rVert_1 \right)
    \\
    &\leq  \gamma(1- \gamma)^{-2} R_{\text{max}}\left(\underset{s,s'}{\text{sup}} \left | \int_{\bar e} (P(s'|s, \bar e) - P(s'|s, \bar e_k)) \mu^*(\bar e|s) \mathrm{d}\bar e \right | \right)
    \\
    &\leq \gamma(1- \gamma)^{-2} R_{\text{max}} \left(\underset{s,s',\bar e}{\text{sup}} \middle | P(s'|s, \bar e) - P(s'|s, \bar e_k) \middle | \right). \label{eqn:6}
\end{align}
From the Lipschitz condition \eqref{eqn:lipschitz}, we know that $| P(s'|s, \bar e) - P(s'|s, \bar e_k)| \leq \rho \lVert\bar e - \bar e_k\rVert_1$.
As $\bar e_k$ corresponds to the closest available action for $\bar e$, the maximum distance for $\lvert \bar e - \bar e_k\rVert_1$ is bounded by $\epsilon_k$.
Combining \eqref{eqn:6} with these two observations, we get the desired result,
\begin{align}
     v^{\mu^*}(s_0) -v^{\pi^*_k}(s_0) &\leq  \frac{\gamma  \rho \epsilon_k}{(1 - \gamma)^2}  R_{\text{max}}.
\end{align}    
    
\end{proof}

\begin{cor}
Let $\mathcal Y\subseteq \mathcal E$ be the smallest closed set such that,  $P(U_k \subseteq 2^\mathcal Y)=1$. We refer to $\mathcal Y$ as the element-wise-support of $U_k$. If $\,\,\forall k$, the element-wise-support of $U_k$ in an L-MDP is $\mathcal E$, 
then as $k \rightarrow \infty$ the sub-optimality vanishes. That is,
$$\lim_{k \rightarrow \infty} v^{\mu^*}(s_0) -v^{\pi^*_k}(s_0)  \rightarrow 0. $$
\end{cor}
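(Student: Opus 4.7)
The plan is to reduce the corollary to showing $\epsilon_k \to 0$ via Theorem 1, and then exploit the hypothesis that the element-wise support of every $U_k$ is all of $\mathcal{E}$ to argue that the available action set becomes dense in $\mathcal{E}$ in the limit. Concretely, by Theorem 1 we have
\begin{equation*}
0 \leq v^{\mu^*}(s_0) - v^{\pi^*_k}(s_0) \leq \frac{\gamma \rho R_{\max}}{(1-\gamma)^2}\,\epsilon_k,
\end{equation*}
so it is enough to establish $\epsilon_k \to 0$ (almost surely, since $U_k$ is a random variable), and by the squeeze-type bound above the sub-optimality follows.

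First I would fix an arbitrary $\delta > 0$ and cover the structure space $\mathcal{E}$ by finitely many $\ell_1$-balls $B(c_1, \delta/2), \ldots, B(c_N, \delta/2)$ of radius $\delta/2$. This requires $\mathcal{E}$ to be totally bounded; if $\mathcal{E} \subseteq \mathbb{R}^d$ is assumed compact (as the standing Lipschitz/structure assumption implicitly suggests), such a finite cover exists. Next, since $\mathcal{Y} = \mathcal{E}$ is the element-wise support of each $U_k$, by definition every open subset of $\mathcal{E}$ has strictly positive probability of intersecting the newly drawn action set. In particular, for each ball $B(c_i, \delta/2)$ there is some $p_i > 0$ such that $\Pr(U_k \cap B(c_i, \delta/2) \neq \emptyset \mid I_k = 1) \geq p_i$, giving a uniform lower bound $p := \min_i p_i > 0$.

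Next I would apply a Borel--Cantelli style argument: because updates occur infinitely often (I would appeal to the assumption implicit in the L-MDP model that $\mathcal{F}(\tau)$ yields infinitely many updates; if needed, this must be added as a hypothesis) and each update independently lands in each ball with probability at least $p$, almost surely every ball $B(c_i, \delta/2)$ eventually contains some action whose underlying embedding has been added. Once this event holds for all $N$ balls, any point $e \in \mathcal{E}$ lies in some $B(c_i, \delta/2)$, and so does the representative action in that ball, giving $\inf_{e_j \text{ available}} \|e - e_j\|_1 \leq \delta$ by triangle inequality. Taking the supremum over $e \in \mathcal{E}$ gives $\epsilon_k \leq \delta$ for all sufficiently large $k$, almost surely.

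Finally I would let $\delta \to 0$ through a countable sequence (e.g.\ $\delta = 1/n$), take the intersection of the corresponding probability-one events, and conclude $\limsup_{k\to\infty} \epsilon_k = 0$ almost surely. Combining with Theorem 1 yields the desired limit. The main obstacle I anticipate is technical rather than conceptual: justifying the uniform covering step (which needs $\mathcal{E}$ to be at least totally bounded, an assumption that appears implicit but is not stated explicitly) and formalizing the ``updates occur infinitely often'' condition from the frequency distribution $\mathcal{F}$. A secondary subtlety is that convergence is almost sure rather than deterministic; the statement as written reads as a deterministic limit, so one should interpret it as ``in the limit, with probability one,'' which is the natural reading given the stochastic generation of $U_k$.
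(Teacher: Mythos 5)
Your proposal is correct in substance and reaches the result, but it takes a genuinely different route from the paper. The paper also reduces the corollary to showing $\epsilon_k \to 0$ and then invokes Theorem 1, but for the convergence of $\epsilon_k$ it treats the accumulated action embeddings as $n$ i.i.d.\ points in $\mathcal E$, partitions $\mathcal E$ into the induced Voronoi cells, and cites a distribution-free result of Devroye et al.\ (2017, Theorem 4) stating that under full support the cell diameters shrink at rate $n^{-1/d}$; since $\epsilon_k$ is dominated by the largest cell diameter, $\epsilon_k \to 0$. Your covering-plus-Borel--Cantelli argument is more elementary and self-contained, and it has the virtue of surfacing explicitly the assumptions the paper leaves implicit (total boundedness of $\mathcal E$, infinitely many update events, almost-sure rather than deterministic convergence). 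What you lose relative to the paper is the quantitative rate $n^{-1/d}$, which the Voronoi-cell result provides for free. One step you should tighten: your uniform lower bound $p = \min_i p_i > 0$ on the per-update probability of hitting each ball does not follow from full element-wise support alone, because the distributions $\mathcal D_\tau$ may vary with $\tau$ and the hitting probabilities could be positive yet summable, in which case the second Borel--Cantelli lemma fails. You need either an i.i.d.\ (or identically distributed) assumption on the draws --- which is exactly what the paper's proof implicitly assumes --- or a non-summability hypothesis on $\sum_k \Pr\bigl(U_k \cap B(c_i,\delta/2) \neq \emptyset\bigr)$. With that assumption made explicit, your argument goes through.
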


\begin{proof}
    Let $X_1, ..., X_n$ be independent identically distributed random vectors in $\mathcal E$. Let $X_i$ define a partition of $\mathcal E$ in $n$ sets $V_1, ... V_n$, such that $V_i$ contains all points in $\mathcal E$ whose nearest neighbor among $X_1, ..., X_n$ is $X_i$. Each such $V_i$ forms a \textit{Voronoi cell}.
    Now using the condition on full element-wise support, we know from the distribution free result  by \citet{devroye2017measure} that the diameter($V_i$) converges to $0$ at the rate $n^{-1/d}$ as $n \rightarrow \infty$ (Theorem 4, \citet{devroye2017measure}).
    As $\epsilon_k$ corresponds to the maximum distance between closest pair of points in $\mathcal E$, $\epsilon_k \leq \underset{i}{\sup} ~ \text{diameter}(V_i)$.
    Therefore, when $k \rightarrow \infty$ then $n \rightarrow \infty$; consequently $\epsilon_k \rightarrow 0$ and thus $ v^{\mu^*}(s_0) -v^{\pi^*_k}(s_0) \rightarrow 0$.
    %
    
\end{proof}

\begin{thm}
For an L-MDP $\mathcal M_k$,
If there exists a $\varphi : S \times S \times \hat{\mathcal E} \rightarrow [0, 1] $ and $\hat \phi_k : \hat {\mathcal E} \times \mathcal A \rightarrow [0, 1]$ such that 
\begin{align}
    \sup_{s \in \mathcal S, a \in \mathcal A} \text{KL}\left(P(S_{t+1}|S_t=s, A_t=a) \Vert P(S_{t+1}|S_t=s, A_t=\hat A) \right) \leq \delta_k^2/2, 
    \label{eqn:aszx} 
\end{align}
where $\hat A 
\sim \hat \phi_k(\cdot|\hat E)$ and $\hat E \sim \varphi(\cdot | S_t, S_{t+1})$, then 
\begin{align*}
 v^{\mu^*}(s_0) -v^{\pi_k^{**}}(s_0)  &\leq \frac{\gamma \left( \rho \epsilon_k + \delta_k \right)}{(1 - \gamma)^2}  R_{\text{max}}.
\end{align*}
\end{thm}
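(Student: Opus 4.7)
The plan is a triangle-inequality decomposition. I would write
\[
v^{\mu^*}(s_0) - v^{\pi_k^{**}}(s_0) \leq \bigl(v^{\mu^*}(s_0) - v^{\pi_k^*}(s_0)\bigr) + \bigl(v^{\pi_k^*}(s_0) - v^{\pi_k^{**}}(s_0)\bigr)
\]
and immediately invoke Theorem 1 to bound the first summand by $\gamma\rho\epsilon_k R_{\max}/(1-\gamma)^2$. Since every policy in the $(\beta,\hat\phi_k)$ class ultimately emits actions in $\mathcal A$ through $\hat\phi_k$, one has $v^{\pi_k^{**}} \leq v^{\pi_k^*}$, so the second summand is nonnegative and the entire problem reduces to showing it is at most $\gamma\delta_k R_{\max}/(1-\gamma)^2$ using the hypothesis \eqref{eqn:aszx}.

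To bound that second summand, I would exhibit a concrete policy $\tilde\pi$ representable in the $(\beta,\hat\phi_k)$ class whose one-step transition kernel is close to that of $\pi_k^*$ uniformly in states; by optimality of $\pi_k^{**}$ within the class, $v^{\tilde\pi} \leq v^{\pi_k^{**}}$, so it suffices to bound $v^{\pi_k^*}(s_0) - v^{\tilde\pi}(s_0)$. Specifically, define
\[
\beta(s,\hat e) \coloneqq \sum_{a \in \mathcal A} \pi_k^*(a|s) \sum_{s'} P(s'|s,a)\, \varphi(\hat e|s,s'),
\]
which depends only on $s$ because the hypothetical successor $s'$ is integrated out. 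Sampling $\hat e \sim \beta(s,\cdot)$ followed by $\hat a \sim \hat\phi_k(\cdot|\hat e)$ reproduces exactly the composite action $\hat A$ from \eqref{eqn:aszx} averaged against $\pi_k^*(\cdot|s)$, so the one-step kernel under $\tilde\pi$ is $\sum_a \pi_k^*(a|s)\, P(\cdot|s,\hat A)$. By the convexity of the total-variation distance, $\|P(\cdot|s,\pi_k^*) - P(\cdot|s,\tilde\pi)\|_1 \leq \sup_{s,a}\|P(\cdot|s,a) - P(\cdot|s,\hat A)\|_1$.

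Next, Pinsker's inequality applied to \eqref{eqn:aszx} upgrades the KL hypothesis to a uniform total-variation bound of $\delta_k$. Substituting into the state-distribution shift estimate \eqref{eqn:5} of Appendix A gives $D(\pi_k^*,\tilde\pi) \leq \gamma\delta_k/(1-\gamma)$, and converting this into a value gap by taking the $R_{\max}$ factor out of the reward integral, exactly as in the chain from \eqref{eqn:2} to \eqref{eqn:wq} of the proof of Theorem 1, delivers $v^{\pi_k^*}(s_0) - v^{\tilde\pi}(s_0) \leq \gamma\delta_k R_{\max}/(1-\gamma)^2$. Summing with the Theorem 1 bound yields the claimed $\gamma(\rho\epsilon_k+\delta_k) R_{\max}/(1-\gamma)^2$.

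The main obstacle, in my estimation, is not any new inequality but the bookkeeping around $\tilde\pi$: one must verify carefully that the $\beta$ written above is a legitimate state-conditional policy (the apparent dependence on $s'$ is integrated against $P(\cdot|s,a)$), and that the resulting induced kernel is genuinely the object whose KL divergence from the true kernel is controlled by \eqref{eqn:aszx}, rather than a merely similar-looking marginal. Once this identification is pinned down, everything else is a straightforward combination of Pinsker and a verbatim reuse of the distribution-shift machinery already deployed in the proof of Theorem 1.
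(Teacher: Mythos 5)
Your proof is correct and follows the same overall route as the paper's: the identical decomposition through $\pi_k^*$, Theorem 1 for the first summand, and Pinsker's inequality combined with the state-distribution-shift bound of Appendix A for the second. Where you diverge is in fact an improvement. The paper bounds $v^{\pi_k^*}(s_0)-v^{\pi_k^{**}}(s_0)$ by writing $\sup_s\sqrt{2\,\mathrm{KL}\left(P^{\pi_k^*}(\cdot|s)\,\Vert\, P^{\pi_k^{**}}(\cdot|s)\right)}$ and then asserting this is controlled by the hypothesis ``where $a\sim\pi_k^*$ and $\hat a\sim\pi_k^{**}$''; this elides the fact that the hypothesis constrains the composite action $\hat A$ generated by $\varphi$ and $\hat\phi_k$ from a transition of $\pi_k^*$, not the action chosen by the optimal-in-class policy $\pi_k^{**}$, whose induced kernel has no a priori reason to be close to that of $\pi_k^*$. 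Your insertion of the explicit surrogate $\tilde\pi$ --- obtained by marginalizing $\varphi$ against $\pi_k^*$ and its induced transitions --- together with $v^{\tilde\pi}\le v^{\pi_k^{**}}$ by optimality within the class, is precisely the step needed to make this rigorous, and your convexity argument identifying the induced kernel of $\tilde\pi$ with the mixture $\sum_a\pi_k^*(a|s)P(\cdot|s,\hat A)$ is sound. The only caveat is cosmetic: your $\tilde\pi$ is defined using the true dynamics $P(s'|s,a)$, so it is an existence argument rather than a constructive one, but that is all the theorem requires.
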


%

\begin{proof}
We begin by noting that,
\begin{align}
    v^{\mu^*}(s_0) -v^{\pi_k^{**}}(s_0) &= v^{\mu^*}(s_0) - v^{\pi_k^{*}}(s_0) + v^{\pi_k^{*}}(s_0) -v^{\pi_k^{**}}(s_0).  
\end{align}
Using Theorem \eqref{thm:1},
\begin{align}
  v^{\mu^*}(s_0) -v^{\pi_k^{**}}(s_0) &\leq  \frac{\gamma  \rho \epsilon_k}{(1 - \gamma)^2}  R_{\text{max}} + \left ( v^{\pi_k^{*}}(s_0) -v^{\pi_k^{**}}(s_0) \right). \label{eqn:total_sub}
\end{align}
Now we focus on bounding the last two terms in \eqref{eqn:total_sub}.
Following steps similar to \eqref{eqn:2} and \eqref{eqn:wq} it can bounded as,
    \begin{align}
         v^{\pi_k^*}(s_0) -v^{\pi_k^{**}}(s_0)    &\leq   \left | (1- \gamma)^{-1} R_{\text{max}}\int_{\mathcal{S}} \middle (d_{\pi_k^*}(s) -  d_{\pi_k^{**}}(s) \middle) \mathrm{d}s  \right |
         \\
            &\leq   (1- \gamma)^{-1} R_{\text{max}}\int_{\mathcal{S}} \left |  d_{\pi_k^*}(s) -  d_{\pi_k^{**}}(s)  \right | \mathrm{d}s 
        \\
        &=  (1- \gamma)^{-1} R_{\text{max}}D(\pi_k^*, \pi_k^{**})
         \\
        &= \gamma(1- \gamma)^{-2} R_{\text{max}} \left \lVert (P^{\pi_k^*} - P^{\pi_k^{**}}) d_{\pi_k^{**}}\right \rVert_1
        \\
        &\leq \gamma(1- \gamma)^{-2} R_{\text{max}} \left( \mathbb{E} \left[ 2TV \left ( P^{\pi_k^*}(s'|s) \Vert P^{\pi_k^{**}}(s'|s)\right) \middle| s \sim d_{\pi_k^{**}} \right ] \right),
    \end{align}
where $TV$ stands for total variation distance.
Using Pinsker's inequality,
\begin{align}
    v^{\pi_k^*}(s_0) -v^{\pi_k^{**}}(s_0) 
        &\leq \gamma(1- \gamma)^{-2} R_{\text{max}}   \left(\underset{s}{\text{sup}}  \sqrt{2 KL \left ( P^{\pi_k^*}(s'|s) \Vert P^{\pi_k^{**}}(s'|s) \right) }
          \right),
         \\
         &\leq \gamma(1- \gamma)^{-2} R_{\text{max}}   \left(\underset{s, a}{\text{sup}}  \sqrt{2 KL \left ( P(s'|s, a) \Vert P(s'|s, \hat a) \right) }
          \right),
\end{align}
where, $a \sim \pi_k^*$ and $\hat a \sim \pi_k^{**}$.
As condition \eqref{eqn:aszx} ensures that maximum KL divergence error between an actual $a$ and an action that can be induced through $\hat \phi_k$ for transitioning from $s$ to $s'$ is bounded by $\delta_k^2/2$, we get the desired result, 
\begin{align}
     v^{\pi_k^*}(s_0) -v^{\pi_k^{**}}(s_0) 
        &\leq \frac{\gamma \delta_k}{(1 - \gamma)^2}  R_{\text{max}}. \label{eqn:xcv}
\end{align}

Therefore taking the union bound on \eqref{eqn:total_sub} and \eqref{eqn:xcv}, we get the desired result
\begin{align*}
 v^{\mu^*}(s_0) -v^{\pi_k^{**}}(s_0)  &\leq \frac{\gamma \left( \rho \epsilon_k + \delta_k \right)}{(1 - \gamma)^2}  R_{\text{max}}.
\end{align*}

\end{proof}

\section{C: Lower Bound Objective For Adaptation}
\label{apx:lb}
\begin{align}
    \mathcal L(\hat \phi, \varphi) &=  \sum_{s \in \mathcal S} \sum_{a \in \mathcal A_k} P(s, a) \text{KL}\left(P(s'|s, a) \Vert P(s'|s, \hat a) \right) 
    \\
    &= - \sum_{s \in \mathcal S} \sum_{a \in \mathcal A_k} P(s, a) \sum_{s' \in \mathcal S} P(s'|s, a) \log P(s'|s, \hat a) + C_1   \label{eqn:qwe}
\end{align}
where $C_1$ is a constant corresponding to the entropy term in KL that is independent of $\hat a$. 
Continuing, we take the negative on both sides,
\begin{align}
    - \mathcal L(\hat \phi, \varphi) &= \sum_{s \in \mathcal S} \sum_{a \in \mathcal A_k}\sum_{s' \in \mathcal S} P(s, a, s') \log P(s'|s, \hat a) - C_1 
    \\
    &= \sum_{s \in \mathcal S} \sum_{a \in \mathcal A_k}\sum_{s' \in \mathcal S} P(s, a, s') \log \frac{P(s, \hat a, s')}{P(s, \hat a)} - C_1
    \\
    &= \sum_{s \in \mathcal S} \sum_{a \in \mathcal A_k}\sum_{s' \in \mathcal S} P(s, a, s') \log \frac{P(s, \hat a, s')}{\sum_{s' \in \mathcal S}P(s, \hat a, s')} - C_1
    \\
    &= \sum_{s \in \mathcal S} \sum_{a \in \mathcal A_k}\sum_{s' \in \mathcal S} P(s, a, s')\left [\log P(s, \hat a, s') - \log Z \right] - C_1
\end{align}
where $Z = \sum_{s' \in \mathcal S}P(s, \hat a, s')$ is the normalizing factor.
As $- \log Z$ is always positive, we obtain the following lower bound,
\begin{align}
    \\
    - \mathcal L(\hat \phi, \varphi) &\geq \sum_{s \in \mathcal S} \sum_{a \in \mathcal A_k}\sum_{s' \in \mathcal S} P(s, a, s') \log P(s, \hat a, s') - C_1
    \\
    &= \sum_{s \in \mathcal S} \sum_{a \in \mathcal A_k}\sum_{s' \in \mathcal S} P(s, a, s') \log P(\hat a|s, s')P(s, s') - C_1
    \\
    &= \sum_{s \in \mathcal S} \sum_{a \in \mathcal A_k}\sum_{s' \in \mathcal S} P(s, a, s') \log P(\hat a|s, s') + C_2 - C_1, \label{eqn:lbb}
\end{align}
where $C_2$ is another constant consisting of $\log P(s, s')$ and is independent of $\hat a$.

Now, let us focus on $P(\hat a|s,s')$, which represent the probability of the action $\hat a$ given the transition $s, s'$.
Notice that $\hat a$ is selected by $\hat \phi$ only using $\hat e$.
Therefore, given $\hat e$, probability of $\hat a$ is independent of everything else, 
\begin{align}
    \log P(\hat a|s,s') &=  \log \int P(\hat a| \hat e, s, s')P(\hat e | s, s') \mathrm{d}\hat e = \log \int P(\hat a|\hat e) P(\hat e | s, s') \mathrm{d}\hat e. \label{eqn:likelihood}
\end{align}
%
%
Let $Q(\hat e|s,s')$ be a parameterized distribution that encodes the context $(s, s')$ into the structure $\hat e$,
then, we can write \eqref{eqn:likelihood} as, 
\begin{align}
    \log P(\hat a|s,s') &=  \log \int \frac{Q(\hat e|s, s')}{Q(\hat e|s, s')} P(\hat a|\hat e) P(\hat e | s, s') \mathrm{d}\hat e
    \\
    &= \log \mathbf{E}\left[\frac{ P(\hat a|\hat e) P(\hat e|s, s')}{Q(\hat e|s, s')} \middle | Q(\hat e|s,s')  \right]
    \\
    &\geq \mathbf{E}\left[\log \frac{ P(\hat a|\hat e) P(\hat e| s, s')}{Q(\hat e|s, s')} \middle | Q(\hat e|s,s')  \right] \hspace{30pt} \text{(from Jensen's inequality)}
    \\
    &= \mathbf{E}\Big[\log P(\hat a|\hat e)  \Big | Q(\hat e|s,s')  \Big] + \mathbf{E}\left[\log \frac{ P(\hat e|s, s')}{Q(\hat e|s, s')} \middle | Q(\hat e|s,s')  \right]
    \\
    &= \mathbf{E}\Big[\log P(\hat a|\hat e) \Big | Q(\hat e|s,s')  \Big] - \text{KL}\Big(Q(\hat e|s, s') \Big \Vert P(\hat e | s, s')  \Big). \label{eqn:beta_vae_0}
\end{align}
Notice that $P(\hat a|e)$ and $Q(e|s, s')$ correspond to $\hat \phi$ and $\varphi$, respectively.
$P(\hat e | s, s')$ corresponds to the prior on $\hat e$.
Therefore, combining \eqref{eqn:lbb} and \eqref{eqn:beta_vae_0} we get,
\begin{align}
    - \mathcal L(\hat \phi, \varphi) \geq  \sum_{s \in \mathcal S} \sum_{a \in \mathcal A_k}\sum_{s' \in \mathcal S} P(s, a, s') \left (  \mathbf{E}\left[\log \hat \phi(\hat a|\hat e) \middle | \varphi(\hat e|s,s')  \right] - \text{KL}\Big(\varphi(\hat e|s, s') \Big \Vert P(\hat e|s, s')  \Big) \right) + C,
\end{align}
where $C$ denotes all the constants.

\section{D: Lifelong Adaptation and Improvement}
\label{apx:laica}

 \begin{prop} (Lifelong Adaptation and Improvement)
 In an L-MDP,  let $\Delta$ denote the difference of performance between $v^{\mu^*}$ and the best achievable using our policy parameterization, then the overall sub-optimality can be expressed as, 
  \begin{align}
    v^{\mu^*}(s_0) - v_{_{\mathcal M_1}}^{\beta_1 \hat \phi_1}(s_0) &=  \underbrace{\sum_{k=1}^{\infty}\left(v_{_{\mathcal M_k}}^{\beta_{k} \hat \phi_k^*}(s_0) - v_{_{\mathcal M_{k}}}^{\beta_{k}  \hat\phi_{k}}(s_0) \right)}_{\text{Adaptation}} + \underbrace{\sum_{k=1}^{\infty}\left(v_{_{\mathcal M_k}}^{\beta_k^*  \hat\phi_k^*}(s_0) - v_{_{\mathcal M_k}}^{\beta_k  \hat\phi_k^*}(s_0) \right)}_{\text{Policy Improvement}} + \Delta, 
  \end{align} 
  where $\mathcal M_k$ is used in the subscript to emphasize the respective L-MDP.
 \end{prop}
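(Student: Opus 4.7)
The plan is to prove the identity by first isolating the single telescoping identity that drives everything, and then splitting each term of that telescope into the adaptation and improvement pieces via an intermediate quantity. The central observation I would use is the recursive definition stated in the paragraph just above the property: at every change, $\beta_k \coloneqq \beta_{k-1}^*$ and $\hat\phi_k \coloneqq \hat\phi_{k-1}^*$. Because $\mathcal M_k$ differs from $\mathcal M_{k-1}$ only in that additional actions have become available, and neither $\beta_k$ nor $\hat\phi_k$ (being just the learned components from step $k-1$) places any mass on the newly added actions, the value of the carried-over policy is unchanged across the transition:
\begin{equation}
v_{\mathcal M_k}^{\beta_k \hat\phi_k}(s_0) \;=\; v_{\mathcal M_{k-1}}^{\beta_{k-1}^* \hat\phi_{k-1}^*}(s_0), \qquad k\ge 2. \label{eqn:carry}
\end{equation}
This is the only substantive structural fact I need; everything else is bookkeeping.

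Next I would set $a_k \coloneqq v_{\mathcal M_k}^{\beta_k^* \hat\phi_k^*}(s_0)$ and $b_k \coloneqq v_{\mathcal M_k}^{\beta_k \hat\phi_k}(s_0)$, so that \eqref{eqn:carry} says $b_k = a_{k-1}$ for $k \ge 2$. The finite partial sum telescopes cleanly:
\begin{equation}
\sum_{k=1}^{N}(a_k - b_k) \;=\; (a_1 - b_1) + \sum_{k=2}^{N}(a_k - a_{k-1}) \;=\; a_N - b_1.
\end{equation}
Taking $N \to \infty$, and defining $\Delta \coloneqq v^{\mu^*}(s_0) - \lim_{k\to\infty} a_k$ exactly as the gap stated in the property (the difference between $v^{\mu^*}$ and what the parameterization can achieve in the limit), I obtain
\begin{equation}
\sum_{k=1}^{\infty}\bigl(v_{\mathcal M_k}^{\beta_k^* \hat\phi_k^*}(s_0) - v_{\mathcal M_k}^{\beta_k \hat\phi_k}(s_0)\bigr) \;=\; v^{\mu^*}(s_0) - \Delta - v_{\mathcal M_1}^{\beta_1 \hat\phi_1}(s_0). \label{eqn:tele}
\end{equation}

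Finally, I would decompose each summand of \eqref{eqn:tele} by inserting the intermediate value $v_{\mathcal M_k}^{\beta_k \hat\phi_k^*}(s_0)$, which corresponds to applying the adapted representation $\hat\phi_k^*$ before the decision-making component has been improved:
\begin{equation}
v_{\mathcal M_k}^{\beta_k^* \hat\phi_k^*}(s_0) - v_{\mathcal M_k}^{\beta_k \hat\phi_k}(s_0) \;=\; \underbrace{\bigl(v_{\mathcal M_k}^{\beta_k^* \hat\phi_k^*}(s_0) - v_{\mathcal M_k}^{\beta_k \hat\phi_k^*}(s_0)\bigr)}_{\text{policy improvement at step }k} + \underbrace{\bigl(v_{\mathcal M_k}^{\beta_k \hat\phi_k^*}(s_0) - v_{\mathcal M_k}^{\beta_k \hat\phi_k}(s_0)\bigr)}_{\text{adaptation at step }k}.
\end{equation}
Summing over $k$, substituting into \eqref{eqn:tele}, and rearranging yields the claimed identity.

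The main obstacle is a conceptual rather than computational one: justifying \eqref{eqn:carry} carefully. One must articulate that $\mathcal M_k$ and $\mathcal M_{k-1}$ share the same $\mathcal S, \mathcal P, \mathcal R, \gamma, d_0$, so the value of any policy whose support lies in the actions already present in $\mathcal M_{k-1}$ is identical in both MDPs; this is exactly the situation for the carried-over $(\beta_k,\hat\phi_k)=(\beta_{k-1}^*,\hat\phi_{k-1}^*)$ before any new parameters are attached to $\hat\phi_k$. A secondary care-point is the interpretation of the limiting term $\lim_{k\to\infty} a_k$, which simply re-labels the residual gap as $\Delta$ in line with the property's statement; existence of this limit is not asserted by the property, only that the residual is named $\Delta$.
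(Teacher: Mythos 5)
Your proposal is correct and follows essentially the same route as the paper's proof: both hinge on the carry-over identity $v_{\mathcal M_k}^{\beta_k \hat\phi_k}(s_0) = v_{\mathcal M_{k-1}}^{\beta_{k-1}^* \hat\phi_{k-1}^*}(s_0)$ and on inserting the intermediate value $v_{\mathcal M_k}^{\beta_k \hat\phi_k^*}(s_0)$ to split each step into adaptation and improvement, with the paper performing the telescoping by successively ``unravelling'' the sub-optimality from $\mathcal M_1$ into $\mathcal M_2, \mathcal M_3, \dots$ rather than via your explicit partial-sum argument. Your version is slightly more careful in that it states and justifies the carry-over identity, which the paper invokes only implicitly when it rewrites $v_{\mathcal M_1}^{\beta_1^* \hat\phi_1^*}(s_0)$ as $v_{\mathcal M_2}^{\beta_2 \hat\phi_2}(s_0)$.
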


 \begin{proof}
 \begin{align}
     v^{\mu^*}(s_0) - v_{_{\mathcal M_1}}^{\beta_1 \hat\phi_1}(s_0) &= v^{\mu^*}(s_0) \pm v_{_{\mathcal M_1}}^{\beta_1 \hat\phi_1^*}(s_0) - v_{_{\mathcal M_1}}^{\beta_1 \hat\phi_1}(s_0)
     \\
     &= \left( v^{\mu^*}(s_0) - v_{_{\mathcal M_1}}^{\beta_1 \hat\phi_1^*}(s_0) \right) + \left( v_{_{\mathcal M_1}}^{\beta_1 \hat\phi_1^*}(s_0) - v_{_{\mathcal M_1}}^{\beta_1 \hat\phi_1}(s_0) \right)
     \\
     &= \left( v^{\mu^*}(s_0) \pm v_{_{\mathcal M_1}}^{\beta_1^* \hat\phi_1^*}(s_0) - v_{_{\mathcal M_1}}^{\beta_1 \hat\phi_1^*}(s_0) \right) + \left( v_{_{\mathcal M_1}}^{\beta_1 \hat\phi_1^*}(s_0) - v_{_{\mathcal M_1}}^{\beta_1 \hat\phi_1}(s_0) \right)
     \\
     &= \left( v^{\mu^*}(s_0) - v_{_{\mathcal M_1}}^{\beta_1^* \hat\phi_1^*}(s_0) \right) + \left( v_{_{\mathcal M_1}}^{\beta_1^* \hat\phi_1^*}(s_0) - v_{_{\mathcal M_1}}^{\beta_1 \hat\phi_1^*}(s_0) \right) + \left( v_{_{\mathcal M_1}}^{\beta_1 \hat\phi_1^*}(s_0) - v_{_{\mathcal M_1}}^{\beta_1 \hat\phi_1}(s_0) \right).
 \end{align}
 As $\beta_2 \coloneqq \beta_1^*$ and $\hat\phi_2 \coloneqq \hat\phi_1^*$ in $\mathcal M_2$, 
 \begin{align}
     v^{\mu^*}(s_0) - v_{_{\mathcal M_1}}^{\beta_1 \hat\phi_1}(s_0)
     &= \left( v^{\mu^*}(s_0) - v_{_{\mathcal M_2}}^{\beta_2 \hat\phi_2}(s_0) \right) + \left( v_{_{\mathcal M_1}}^{\beta_1^* \hat\phi_1^*}(s_0) - v_{_{\mathcal M_1}}^{\beta_1 \hat\phi_1^*}(s_0) \right) + \left( v_{_{\mathcal M_1}}^{\beta_1 \hat\phi_1^*}(s_0) - v_{_{\mathcal M_1}}^{\beta_1 \hat\phi_1}(s_0) \right).
 \end{align}
 Notice that we have expressed the sub-optimality in $\mathcal M_1$ as sub-optimality in $\mathcal M_2$, plus adaptation and a policy improvement terms in $\mathcal M_1$. 
 Expanding it one more time,
  \begin{align}
      v^{\mu^*}(s_0) - v_{_{\mathcal M_1}}^{\beta_1 \hat\phi_1}(s_0)
     &= \left( v^{\mu^*}(s_0) - v_{_{\mathcal M_3}}^{\beta_3 \hat\phi_3}(s_0) \right) + \\
     &\hspace{10pt} \sum_{k=1}^2 \left( v_{_{\mathcal M_k}}^{\beta_k^* \hat\phi_k^*}(s_0) - v_{_{\mathcal M_k}}^{\beta_k \hat\phi_k^*}(s_0) \right) + \sum_{k=1}^2\left( v_{_{\mathcal M_k}}^{\beta_k \hat\phi_k^*}(s_0) - v_{_{\mathcal M_k}}^{\beta_k \hat\phi_k}(s_0) \right).
  \end{align}
  It is now straightforward to observe the result by successively `unravelling' the sub-optimality in $\mathcal M_3$ in a similar fashion.
The final difference between $v^{\mu^*}$ and the best policy using our proposed parameterization is $\Delta$.
 \end{proof}

\section{E: Algorithm Details}
\label{apx:alg}
A step-by-step pseudo-code for the LAICA algorithm is available in Algorithm 1.
The crux of the algorithm is based on the iterative adapt and improve procedure obtained from Property \ref{prop:LAICA}.

We begin by initializing the parameters for $\beta_{0}^*, \hat \phi_{0}^*$ and $\varphi_{0}^*$.
In Lines $3$ to $5$, for every change in the set of available actions, instead of re-initializing from scratch, the previous best estimates for $\beta, \hat\phi$ and $\varphi$ are carried forward to build upon existing knowledge.
As $\beta$ and $\varphi$ are invariant to the cardinality of the available set of actions, no new parameters are required for them.
In Line $6$ we add new parameters in the function $\hat \phi$ to deal with the new set of available actions.

To minimize the adaptation drop, we make use of Property \ref{prop:lb}.
Let $\mathcal L^{\text{lb}}$ denote the lower bound for $\mathcal L$, such that,
$$ \mathcal L^{\text{lb}}(\hat \phi, \varphi) \coloneqq \mathbf{E}\left[\log \hat \phi(\hat A_t|\hat E_t) \middle | \varphi(\hat E_t|S_t,S_{t+1})  \right] - \lambda \text{KL}\left(\varphi(\hat E_t|S_t, S_{t+1}) \middle \Vert P(\hat E_t|S_t, S_{t+1})  \right).$$
Note that following the literature on variational auto-encoders, we have generalized \eqref{eqn:beta_vae} to use a Lagrangian $\lambda$ to weight the importance of KL divergence penalty \citep{higgins2017beta}.\footnote{Conventionally, the Lagrangian in VAE setting is denoted using $\beta$ \citep{higgins2017beta}. In our paper, to avoid symbol overload, we use $\lambda$ for it.}
When $\lambda = 1$, it degenrates to \eqref{eqn:beta_vae}.
We set the prior $P(\hat e|s, s')$ to be an isotropic normal distribution, which also allows KL to be computed in closed form  \citep{kingma2013auto}. 
From Line $7$ to $11$ in the Algorithm 1, random actions from the available set of actions are executed and their corresponding transitions are collected in a buffer.
Samples from this buffer are then used to maximize the lower bound objective $\mathcal L^{\text{lb}}$ and adapt the parameters of $\hat \phi$ and $\varphi$.
The optimized $\hat \phi^*$ is then kept fixed during policy improvement.

Lines $16$-$22$ correspond to the standard policy gradient approach for improving the performance of a policy.
In our case, the policy $\beta$ first outputs a vector $\hat e$ which gets mapped by $\hat \phi^*$ to an action.
The observed transition is then used to compute the policy gradient \citep{sutton2000policy} for updating the parameters of $\beta$ towards $\beta^*$.  
%
%
	%
If a critic is used for computing the policy gradients, then it is also subsequently updated by minimizing the TD error \citep{sutton2018reinforcement}.
This iterative process of adaption and policy improvement continues for every change in the action set size.
%
%
%
		\IncMargin{1em}
	\begin{algorithm2e}
		\label{Alg:1} 
		\caption{Lifelong Adaptation and Improvement for Changing Actions (LAICA)}
	    \textbf{Initialize} $\beta_{0}^*, \hat \phi_{0}^*, \varphi_{0}^*$.
	    \\
        \For {\text{change} $k = 1,2...$} 
        {
            
            $\beta_k \leftarrow \beta_{k-1}^*$  \tikzmark{top}
            \\
            $\varphi_k \leftarrow \varphi_{k-1}^*$ 
            \\
            $\hat \phi_k \leftarrow \hat \phi_{k-1}^*$ 
            \\
            Add parameters in $\hat \phi_k$ for new actions ~~~~~~~~\tikzmark{right} \tikzmark{bottom}
            \\
            
            \vspace{10pt}
    		Buffer $\mathbb{B} = \{\}$  \tikzmark{top2}
    		\\
           \For {$episode = 0,1,2...$}
            {
    			\For {$t = 0,1,2...$}
    			{
			    Execute random $a_t$ and observe $s_{t+1}$ \\
			    Add transition to $\mathbb{B}$ 
			    }
            }
           \For {$iteration = 0,1,2...$}
            {
                Sample batch $b  \sim \mathbb{B}$
                \\
                Update $\hat \phi_k$ and $\varphi_k$ by maximizing $\mathcal L^{\text{lb}}(\hat \phi_k, \varphi_k)$ for $b$ ~~~~~~~~~~\tikzmark{right2}
                
            }    \tikzmark{bottom2}
			 \\
            \vspace{8pt}
    		\tikzmark{top3}
            \For {$episode = 0,1,2...$}{
			\For {$t = 0,1,2...$} {
			    Sample $\hat e_t \sim \beta_k(\cdot|s_t) $ \\
			    Map $\hat e_t$ to an action $a_t$ using $ \hat \phi_k^*(e)$ \\
			    Execute $a_t$ and observe $s_{t+1}, r_{t}$ \\
			    Update $\beta_k$ using any policy gradient algorithm ~~~~~~~~~~~~\tikzmark{right3}\\
			    Update critic by minimizing TD error. \tikzmark{bottom3}
			}
		}
	} 
	\nonl 
	\AddNote{top}{bottom}{right}{Reuse past knowledge.}
	\AddNote{top2}{bottom2}{right2}{Adapt \\ $\hat \phi_k$ to $\hat \phi_k^*$  }
	\AddNote{top3}{bottom3}{right3}{Improve \\ $\beta_k$ to $\beta_k^*$}
	\end{algorithm2e}
	\DecMargin{1em}

\section{F: Empirical Analysis Details}
\label{apx:empirical}
\subsection{Domains}
\label{apx:domain}
To demonstrate the effectiveness of our proposed method(s) on lifelong learning problems, we consider a maze environment and two domains corresponding to real-world applications, all with large set of changing actions. 
		For each of these domains, the total number of actions  were randomly split into five mutually exclusive sets of equal sizes.
		Initially, the agent only had the actions available in the first set and after every change the next set of action was made available additionally.
		 For all our experiments, changes to the action set were made after equal intervals.
    		
        \paragraph{Maze. } As a proof-of-concept, we constructed a continuous-state maze environment where the state comprised of the coordinates of the agent's current location. 
        The agent has $8$ equally spaced actuators (each actuator moves the agent in the direction the actuator is pointing towards) around it, and it can choose whether each actuator should be on or off. 
        Therefore, the total number of possible actions is $2^8 = 256$. 
        The net outcome of an action is the vectorial summation of the displacements associated with the selected actuators.
        The agent is penalized at each time step to encourage it to reach the goal as quickly as possible.
        A goal reward is given when it reaches the goal position
        %
        %
        To make the problem more challenging, random noise was added to the action $10\%$ of the time and the maximum episode length was $150$ steps. 

    	\paragraph{Case Study: Real-world recommender systems. } 
    	We consider two real-world applications of recommender systems that require decision making over multiple time steps  and where the number of possible decisions can vary over the lifetime of the system.

        First, a web-based video-tutorial platform, which has a recommendation engine that suggests a series of tutorial videos on various software.
        On this tutorial platform, there is a large pool of available tutorial videos on several software and new videos are uploaded periodically.
        This requires the recommender system to keep adjusting to these changes constantly.
        The aim for the recommender system is to suggest tutorials so as to meaningfully engage the user on how to use these software and convert novice users into experts in their respective areas of interest.

        The second application is a professional multi-media editing software. 
	    Modern multimedia editing software often contain many tools that can be used to manipulate the media, and this wealth of options can be overwhelming for users.
	    Further, with every major update to the software, new tools are developed and incorporated into the software to enhance user experience. 
	    In this domain, an agent suggests which of the available tools the user may want to use next.
	    The objective is to increase user productivity and assist in achieving their end goal. 

        For both of these applications, an existing log of user's click stream data was used to create an n-gram based MDP model for user behavior \citep{shani2005mdp}.
        In the tutorial recommendation task, sequences of user interaction were aggregated to obtain over $29$ million clicks.   
        Similarly, sequential usage patterns of the tools in the multi-media editing software were collected to obtain a total of over $1.75$ billion user clicks.  
        Tutorials and tools that had less than $100$ clicks in total were discarded. 
        The remaining $1498$ tutorials and $1843$ tools for the web-based tutorial platform and the multi-media software, respectively, corresponds to the total number of actions.
        The MDP had continuous state-space, where each state consisted of the feature descriptors associated with each item (tutorial or tool) in the current n-gram. 
        Rewards were chosen based on a surrogate measure for difficulty level of tutorials and popularity of final outcomes of user interactions in the multi-media editing software, respectively. 
        %
        %
        
        %

\subsection{Implementation Details } 
	\label{apx:imp}

For the maze domain, single layer neural networks were used to parameterize both the actor and critic.
The learning rates for policy were searched over the range $[1e-2, 1e-4]$ and for critic it was searched over the range $[5e-2, 5e-4]$. 
State features were represented using the $3^\text{rd}$ order coupled Fourier basis \citep{konidaris2011value}. 
The discounting parameter $\gamma$ was set to $0.99$ and eligibility traces to $0.9$.
Since it was a toy domain, the output dimension of $\beta$ was kept fixed to $2$.
After every change in the action set, $500$ randomly drawn trajectories were used to update $\hat \phi$.
The value of $\lambda$ was searched over the range $[1e-2, 1e-4]$.

For the real-world environments, $2$ layer neural networks were used to parameterize both the actor and critic.
The learning rates for both were searched over the range $[1e-2, 1e-4]$.
Similar to prior works, the module for encoding state features was shared to reduce the number of parameters, and the learning rate for it was additionally searched over $[1e-2, 1e-4]$. 
The dimension of the neural network's hidden layer was searched over $\{64, 128, 256\}$.
The discounting parameter $\gamma$ was set to $0.9$. 
For actor-critic based results eligibility traces was set to $0.9$ and for DPG the target actor and policy update rate was fixed to its default setting of $0.001$. 
The output dimension of $\beta$ was searched over $\{16, 32, 64\}$.
After every change in the action set, samples from $2000$ randomly drawn trajectories were used to update $\hat \phi$.

For all the results of the LAICA, since the output of $\beta$ was defined over a continuous space, it was parameterized as the isotropic normal distribution.
The value for variance was kept fix for the Maze domain and was searched over $[0.5, 1.5]$.
For the real-world domains, the variance was parameterized and learned along with other parameters.
The function $\varphi$ was parameterized to concatenate the state features of both $s$ and $s'$ and use a single layer neural network to project to a space corresponding to the inferred structure in the actions.
The function $\hat \phi$ was linearly parameterized to compute a Boltzmann distribution over the available set of actions.
After every change in the action set, new rows were stacked in its weight matrix for generating scores for the new actions.
The learning rates for functions $\hat \phi$ and $\varphi$ were jointly searched over $[1e-2, 1e-4]$.
%


	%
	
	%
	As our proposed method decomposes the overall policy into two components, the resulting architecture resembles that of a one layer deeper neural network.
	Therefore, for the baselines, we ran the experiments with a hyper-parameter search for policies with additional depths $\{1, 2, 3\}$, each with different combinations of width $\{2, 16, 64\}$.
	The remaining architectural aspects and properties of the hyper-parameter search for the baselines were performed in the same way as mentioned above for our proposed method.
    For dealing with new actions, new rows were stacked in the weight matrix of the last layer of the policy in Baseline(2). 
    
In total, $200$ settings for each algorithm, for each domain, were uniformly sampled from the respective hyper-parameter ranges/sets mentioned.
Results from the best performing setting are reported in all the plots.
Each hyper-parameter setting was independently ran using $10$ different seeds to get the standard error of the performance.

\end{document}